\documentclass[11pt,a4paper]{article}

\usepackage[utf8]{inputenc}
\usepackage[T1]{fontenc}
\usepackage{lmodern}
\usepackage[a4paper,margin=1in]{geometry}
\usepackage{microtype}

\usepackage{amsmath,amssymb,amsfonts,amsthm}
\usepackage{mathtools}
\usepackage{bm}
\usepackage{bbm}
\usepackage{dsfont}
\usepackage{nicefrac}
\usepackage{subfigure}

\usepackage{algorithm}
\usepackage{algpseudocode}

\usepackage{graphicx}
\usepackage{booktabs}
\usepackage{array}
\usepackage{multirow}
\usepackage[table]{xcolor}
\usepackage{caption}
\usepackage{subcaption}
\usepackage{siunitx}

\usepackage{tikz}
\usetikzlibrary{
    shapes,
    shapes.geometric,
    arrows,
    arrows.meta,
    positioning,
    fit,
    backgrounds,
    shadows.blur,
    calc
}

\usepackage{xurl}
\usepackage{doi}

\hypersetup{
    colorlinks=true,
    linkcolor=blue!50!black,
    citecolor=blue!50!black,
    urlcolor=blue!50!black
}

\usepackage{comment}
\usepackage{pifont}
\usepackage{lipsum}

\theoremstyle{plain}
\newtheorem{theorem}{Theorem}[section]

\theoremstyle{definition}

\theoremstyle{remark}
\newtheorem{remark}{Remark}

\newcommand{\mathcircled}[1]{%
    \tikz[baseline=(char.base)]{
        \node[shape=circle,draw,inner sep=1pt] (char) {$#1$};
    }%
}

\definecolor{matrixcolor}{RGB}{200,240,200}
\definecolor{featurecolor}{RGB}{230,220,240}
\definecolor{mlpcolor}{RGB}{180,240,250}
\definecolor{outputcolor}{RGB}{250,220,220}

\tikzset{
    block/.style={
        rectangle,
        draw=#1,
        fill=#1!10,
        thick,
        rounded corners=8pt,
        minimum width=3.2cm,
        minimum height=4.2cm,
        font=\sffamily\small,
        blur shadow={shadow blur steps=5},
        align=center
    },
    arrow/.style={
        -Stealth,
        thick,
        shorten >=3pt,
        shorten <=3pt
    },
    label/.style={
        font=\sffamily\scriptsize,
        text=black!70
    },
    neuron/.style={
        circle,
        draw=#1!80,
        fill=#1!20,
        thick,
        minimum size=6pt,
        inner sep=0pt
    },
    connection/.style={
        -,
        draw=black!20,
        thin
    }
}

\title{
    \vspace{-1.5em}
    \textbf{Estimating Condition Numbers with Graph Neural Networks}
}

\author{
    \href{https://orcid.org/0000-0001-9469-7467}
    {\includegraphics[scale=0.06]{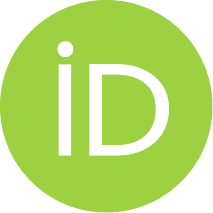}\hspace{1mm}Erin Carson}
    \thanks{Supported by the Charles University Research Centre program No. UNCE/24/SCI/005 and the European Union (ERC, inEXASCALE, 101075632). Views and opinions expressed are those of the authors only and do not necessarily reflect those of the European Union or the European Research Council.}
    \\
    Charles University\\
    Prague, Czech Republic\\
    \texttt{carson@karlin.mff.cuni.cz}
    \and
    \href{https://orcid.org/0000-0003-1778-393X}
    {\includegraphics[scale=0.06]{orcid.pdf}\hspace{1mm}Xinye Chen}
    \\
    Sorbonne Universit\'e, CNRS, LIP6\\
    Paris, France\\
    \texttt{xinye.chen@lip6.fr}
}

\date{}

\hypersetup{
    pdftitle={Estimating Condition Numbers with Graph Neural Networks},
    pdfsubject={Matrix condition number, graph neural networks},
    pdfauthor={Erin Carson and Xinye Chen},
    pdfkeywords={matrix condition number, graph neural networks, AI for numerical methods}
}

\begin{document}
\maketitle

\begin{abstract}
In this paper, we propose a fast method for estimating the condition number of sparse matrices using graph neural networks (GNNs). For efficient deployment of GNNs, we introduce a graph feature construction with $\mathrm{O}(\mathrm{nnz} + n)$ complexity, where $\mathrm{nnz}$ is the number of non-zero elements in the matrix and $n$ denotes the matrix dimension. We propose two schemes for estimating the matrix condition number using GNNs; one follows by decomposing the condition number and predicts the relatively more computationally intensive part $\|\mathbf{A}^{-1}\|$, without explicitly forming the inverse, while the other is to predict the whole condition number $\kappa$. Our approach can be extended to an arbitrary norm. Extensive experiments are conducted for the estimation of the 1-norm and 2-norm condition numbers, which show that our method achieves a significant speedup over the traditional numerical estimation methods. Our software for GNN condition number estimator is made publicly available at \url{https://github.com/inEXASCALE/sparse-kappa}.
\end{abstract}

\textbf{Keywords:} {matrix condition number, graph neural networks, AI for numerical methods}

\section{Introduction}
\label{sec:introduction}
 
The paper is concerned with estimating the  condition number of a sparse matrix $A$ of arbitrary size. The condition number $\kappa(\mathbf{A})$ of a matrix $\mathbf{A} \in \mathbb{R}^{n \times n}$ quantifies the sensitivity of the solution of a linear system to perturbations in the input data. For a nonsingular matrix $\mathbf{A}$, the condition number with respect to the $p$-norm is defined by
\begin{equation}
\kappa_p(\mathbf{A}) = \|\mathbf{A}\|_p \, \|\mathbf{A}^{-1}\|_p,
\label{eq:condition_number}
\end{equation}
where the applications of interest lie in $p=1, 2, +\infty$.  Calculating the condition number exactly is generally prohibitively expensive for large and sparse matrices. In practice, we rarely do this because it often involves costly steps, such as finding the largest and smallest singular values or computing the matrix inverse, which do not scale well for large, sparse problems. For instance, the spectral condition number $\kappa_2(\mathbf{A})$ depends on the extreme singular values of $\mathbf{A}$. In dense linear algebra, these may be obtained by a full singular value decomposition (SVD), but this takes $\mathcal{O}(n^3)$ floating-point operations. Similarly, forming $\mathbf{A}^{-1}$ directly also takes $\mathcal{O}(n^3)$ work in the dense case and is usually avoided. With sparse matrices, things get even trickier because sparse factorizations can lead to significant fill-in.

To address the high computational cost, one often resorts to estimation methods. As referenced in \cite[Ch. 15]{higham2002accuracy}, condition number estimation is the process of calculating an inexpensive estimate of a condition number, where inexpensive usually means that the cost is an order of magnitude less than the exact method.  For spectral condition number estimation, iterative methods such as power iteration, Lanczos, Arnoldi, or Golub--Kahan bidiagonalization reduce the cost substantially, typically requiring $\mathcal{O}(k \cdot \mathrm{nnz}(\mathbf{A}))$ work for $k$ iterations, up to orthogonalization costs. For decades, efficient condition number estimation has therefore been recognized as essential in assessing the stability of linear systems and least-squares problems, particularly in conjunction with matrix factorizations such as LU with partial pivoting; early practical strategies focused on inexpensive post-factorization estimates that avoid explicit inversion~\cite{hager1984condition,higham1988estimating}.

For large sparse matrices, the exact computation of the condition number is often impractical because evaluating $\|\mathbf{A}^{-1}\|_p$ exactly would, in the most direct approach, require either forming $\mathbf{A}^{-1}$ explicitly or solving $n$ linear systems to recover its columns. By contrast, the matrix norm $\|\mathbf{A}\|_p$ is relatively inexpensive to compute, e.g., for 1-norm:
\begin{equation}
\|\mathbf{A}\|_1 = \max_{1 \le j \le n} \sum_{i=1}^n |a_{ij}|,
\end{equation}
which can be evaluated in $\mathcal{O}(\mathrm{nnz}(\mathbf{A}))$ time for a sparse matrix.

Accordingly, the main difficulty  of estimating $\kappa_p(\mathbf{A}) = \|\mathbf{A}\|_p \, \|\mathbf{A}^{-1}\|_p$ often lies in estimating $\|\mathbf{A}^{-1}\|_p$. For matrices small enough to be treated by dense linear algebra routines, this quantity may be computed directly from a dense factorization. For instance, a standard alternative to estimate $\|\mathbf{A}^{-1}\|_1$ is via the Hager--Higham iterative algorithm~\cite{hager1984condition,higham1988estimating}. This procedure avoids explicit inversion by solving a sequence of linear systems
\begin{equation}
\mathbf{A} x^{(k)} = \mathbf{b}^{(k)},
\end{equation}
where the right-hand sides $\mathbf{b}^{(k)}$ are chosen adaptively from the sign pattern of intermediate iterates so as to approximate a vector attaining the maximum in the induced $1$-norm. In practice, a sparse LU factorization of $\mathbf{A}$ is computed once and then reused for all subsequent triangular solves, making the estimator substantially cheaper than explicit inversion.

However, the aforementioned approximate method for condition number estimation mentioned above is limited to the 1-norm case. For the 2-norm (spectral) condition number, which requires estimating the ratio of the largest to smallest singular values, following
\begin{equation}
\kappa_2(\mathbf{A}) := \|\mathbf{A}\|_2 \, \|\mathbf{A}^{-1}\|_2
= \frac{\sigma_{\max}(\mathbf{A})}{\sigma_{\min}(\mathbf{A})},
\end{equation}
where one often uses iterative methods such as the power method or Lanczos algorithm (or its bidiagonalization variant) to approximate the extreme singular values \cite{golub2013matrix} and  the main challenge is to calculate the smallest singular value $\sigma_{\min}$. However, direct computation through a full singular value decomposition is generally infeasible for truly large sparse matrices. Both $\|A\|_2$ and $\|A^{-1}\|_2$ can be estimated via the power method \cite{10.5555/59657}. An alternative practical way is to exploit Krylov subspace projections, which reduce the high-dimensional problem to a much smaller one while still capturing the behavior of the extreme singular values with good accuracy after a modest number of steps.

In the general nonsymmetric setting, a natural tool for estimating the extreme singular values is Golub--Kahan bidiagonalization~\cite{golub2013matrix}. Starting from an initial unit vector, the process constructs orthonormal bases
\begin{equation*}
\mathbf{U}_{k+1} = [\mathbf{u}_1,\dots,\mathbf{u}_{k+1}] \in \mathbb{R}^{n \times (k+1)},
\qquad
\mathbf{V}_{k} = [\mathbf{v}_1,\dots,\mathbf{v}_{k}] \in \mathbb{R}^{n \times k},
\end{equation*}
together with a lower bidiagonal matrix $\mathbf{B}_k \in \mathbb{R}^{(k+1)\times k}$ such that
\begin{equation}
\mathbf{A}\mathbf{V}_k = \mathbf{U}_{k+1}\mathbf{B}_k,
\qquad
\mathbf{A}^\top \mathbf{U}_{k+1}
= \mathbf{V}_k \mathbf{B}_k^\top + \alpha_{k+1}\mathbf{v}_{k+1}\mathbf{e}_{k+1}^\top, 
\label{eq:gk_relation}
\end{equation}
and the projected matrix $\mathbf{B}_k$ has bidiagonal structure,
\begin{equation}
\mathbf{B}_k =
\begin{pmatrix}
\alpha_1 &        &        &        \\
\beta_2  & \alpha_2 &      &        \\
         & \beta_3  & \ddots &      \\
         &          & \ddots & \alpha_k \\
         &          &        & \beta_{k+1}
\end{pmatrix}
\in \mathbb{R}^{(k+1)\times k},
\end{equation}
where the coefficients $\alpha_j > 0$ and $\beta_j \ge 0$ arise from the recurrence. Each iteration requires one multiplication by $\mathbf{A}$ and one multiplication by $\mathbf{A}^\top$, so the dominant cost is typically $\mathcal{O}(\mathrm{nnz}(\mathbf{A}))$ per step, in addition to orthogonalization. Accordingly, the singular values of the small bidiagonal matrix $\mathbf{B}_k$ provide Ritz-type approximations to the singular values of $\mathbf{A}$. In particular, the largest singular value usually converges first, while the smallest singular value may require a somewhat larger subspace dimension, especially when it is poorly separated. Once $k$ is sufficiently large, one obtains the estimate
\begin{equation}
\widehat{\kappa}_2(\mathbf{A})
\approx
\frac{\sigma_{\max}(\mathbf{B}_k)}{\sigma_{\min}(\mathbf{B}_k)}.
\end{equation}
In practice, convergence may be monitored by the stabilization of the extreme singular-value approximations or by residual-based stopping criteria. In the special case where $\mathbf{A}$ is symmetric positive definite, the situation simplifies considerably by replacing $\sigma_{\max}(\mathbf{B}_k)$ and $\sigma_{\min}(\mathbf{B}_k)$ with $\lambda_{\max}(\mathbf{A})$ and $\lambda_{\min}(\mathbf{A})$, respectively. As such, the Lanczos algorithm may be applied directly to $\mathbf{A}$, producing a symmetric tridiagonal matrix $\mathbf{T}_k$ whose extreme eigenvalues approximate those of $\mathbf{A}$. The spectral condition number is then estimated by the ratio of the largest to the smallest Ritz eigenvalue of $\mathbf{T}_k$.

For a general nonsymmetric matrix, however, Arnoldi iteration applied directly to $\mathbf{A}$ approximates eigenvalues rather than singular values, and these are not sufficient in general to estimate $\kappa_2(\mathbf{A})$, since
\begin{equation*}
\|\mathbf{A}\|_2 \neq \max_i |\lambda_i(\mathbf{A})|
\quad\text{and}\quad
\|\mathbf{A}^{-1}\|_2 \neq \frac{1}{\min_i |\lambda_i(\mathbf{A})|}
\end{equation*}
for a nonnormal matrix. Therefore, for the estimation of the spectral condition number in the nonsymmetric case, one typically works with singular-value-oriented procedures such as Golub--Kahan bidiagonalization, or alternatively with Krylov subspace methods applied to the normal equations $\mathbf{A}^\top \mathbf{A}$, bearing in mind that the latter squares the condition number and may worsen numerical sensitivity.

\begin{figure}[htbp]
\centering
\resizebox{\textwidth}{!}{%
\begin{tikzpicture}[
    node distance=2.5cm,
    every node/.style={align=center},
    block/.style={rectangle, draw, rounded corners=5pt, text centered, minimum height=4em, minimum width=6em, inner sep=10pt, font=\sffamily},
    arrow/.style={thick,->,>=stealth,font=\sffamily\scriptsize},
    matrixcolor_node/.style={block, fill=matrixcolor!80, draw=matrixcolor},
    featurecolor_node/.style={block, fill=featurecolor!80, draw=featurecolor},
    gnn_node/.style={block, fill=mlpcolor!80, draw=mlpcolor, minimum height=12em}, 
    outputcolor_node/.style={block, fill=outputcolor!80, draw=outputcolor},
    gnode/.style={circle, draw, fill=blue!10, minimum size=1.2em},         
    gnode_hidden/.style={circle, draw, fill=blue!40, minimum size=1.2em},  
    gnode_final/.style={circle, draw, fill=blue!70, minimum size=1.8em}    
]

\node[matrixcolor_node] (matrix) {
    \textbf{Sparse Matrix} \\[0.4cm]
    $\mathbf{A} \in \mathbb{R}^{n \times n}$ \\[0.2cm]
    $\text{nnz}(\mathbf{A}) = m$ \\[0.4cm]
    \begin{tikzpicture}[scale=0.25]
        \foreach \i in {0,...,4} {
            \draw[black!30] (0,\i) -- (5,\i);
            \draw[black!30] (\i,0) -- (\i,5);
        }
        \fill[black!80] (0.5,4.5) circle (4pt);
        \fill[black!80] (1.5,4.5) circle (4pt);
        \fill[black!80] (1.5,3.5) circle (4pt);
        \fill[black!80] (0.5,3.5) circle (4pt);
        \fill[black!80] (2.5,3.5) circle (4pt);
        \fill[black!80] (2.5,2.5) circle (4pt);
        \fill[black!80] (3.5,2.5) circle (4pt);
        \fill[black!80] (1.5,1.5) circle (4pt);
        \fill[black!80] (3.5,1.5) circle (4pt);
        \fill[black!80] (4.5,0.5) circle (4pt);
    \end{tikzpicture}
};

\node[above=0.25cm of matrix, font=\sffamily\bfseries\large] (title_matrix) {(1) Input};

\node[featurecolor_node, right=of matrix] (features) {
    \textbf{Feature Vector} \\[0.8cm]
    {\Large $\bm{\phi}(\mathbf{A}) \in \mathbb{R}^{k}$} \\[0.8cm]
    $\mathcal{O}(\text{nnz}(\mathbf{A}))$
};

\node[above=0.25cm of features, font=\sffamily\bfseries\large] (title_features) {(2) Extraction};

\node[gnn_node, right=of features] (gnn_block) {
    \textbf{Graph Neural Network} \\[0.6cm]
    \begin{tikzpicture}[scale=0.85, transform shape]
        
        \begin{scope}[local bounding box=g1]
            \node[gnode] (n1_1) at (0, 0.8) {};
            \node[gnode] (n1_2) at (-0.8, -0.2) {};
            \node[gnode] (n1_3) at (0.8, -0.2) {};
            \node[gnode] (n1_4) at (0, -1) {};
            \node[gnode] (n1_5) at (1.6, 0.9) {};
            
            \draw (n1_1) -- (n1_2); \draw (n1_1) -- (n1_3); \draw (n1_1) -- (n1_5);
            \draw (n1_2) -- (n1_4); \draw (n1_3) -- (n1_4); \draw (n1_3) -- (n1_5);
        \end{scope}
        
        \draw[->, >=stealth, thick] (g1.east) ++(-0.3,0) -- ++(0.7,0) coordinate (arr1);
        
        \begin{scope}[local bounding box=g2, xshift=4cm]
            \node[gnode_hidden] (n2_1) at (0, 0.8) {};
            \node[gnode_hidden] (n2_2) at (-0.8, -0.2) {};
            \node[gnode_hidden] (n2_3) at (0.8, -0.2) {};
            \node[gnode_hidden] (n2_4) at (0, -1) {};
            \node[gnode_hidden] (n2_5) at (1.6, 0.9) {};
            
            \draw (n2_1) -- (n2_2); \draw (n2_1) -- (n2_3); \draw (n2_1) -- (n2_5);
            \draw (n2_2) -- (n2_4); \draw (n2_3) -- (n2_4); \draw (n2_3) -- (n2_5);
        \end{scope}

        \draw[->, >=stealth, thick] (g2.east) ++(0.2,0) -- ++(0.8,0) coordinate (arr2);

        \node[gnode_final, right=0.2cm of arr2] (readout) {};

    \end{tikzpicture}
};

\node[above=0.25cm of gnn_block, font=\sffamily\bfseries\large] (title_gnn) {(3) Model Training};

\node[outputcolor_node, right=of gnn_block] (output) {
    \textbf{Estimate} \\[0.8cm]
    {\fontsize{60pt}{60pt}\selectfont $\hat{\kappa}$}
};

\node[above=0.25cm of output, font=\sffamily\bfseries\large] (title_output) {(4) Result};

\draw[arrow] (matrix) -- node[midway, above] {Extract} (features);
\draw[arrow] (features) -- node[midway, above] {Forward} (gnn_block);
\draw[arrow] (gnn_block) -- node[midway, above] {Predict} node[midway, below] {$\log_{10}\kappa_1$} (output);

\begin{pgfonlayer}{background}
    \node[
        fill=pink!8,
        rounded corners=10pt,
        fit=(matrix)(title_matrix),
        inner sep=10pt
    ] {};
    
    \node[
        fill=pink!8,
        rounded corners=10pt,
        fit=(features)(title_features),
        inner sep=10pt
    ] {};
    
    \node[
        fill=pink!8,
        rounded corners=10pt,
        fit=(gnn_block)(title_gnn),
        inner sep=10pt
    ] {};
    
    \node[
        fill=pink!8,
        rounded corners=10pt,
        fit=(output)(title_output),
        inner sep=10pt
    ] {};
\end{pgfonlayer}

\end{tikzpicture}
}
\caption{Pipeline for fast condition number estimation. The method consists of four stages: (1) input sparse matrix $\mathbf{A}$, (2) extract matrix-theoretic features, (3) model training using a multilayer perceptron, and (4) output the estimated condition number $\hat{\kappa}(\mathbf{A})$.}
\label{fig:method}
\end{figure}

We propose a data-driven approach based on GNNs for condition number estimation for sparse systems that: (i) Extracts structural and numerical features of matrix data in $\mathcal{O}(\text{nnz})$ time; (ii) Trains a deep neural network to predict $\log_{10} \kappa(A)$; (iii) Achieves inference time scales insensitive to, or independent of matrix size. Additionally, a novel graph machine learning approach is used for rapidly estimating the arbitrary-norm condition number of large sparse matrices arising in scientific computing applications. Unlike traditional iterative methods such as the Hager-Higham method \cite{hager1984condition, Higham1987}, our method uses a graph neural network trained on a comprehensive set of matrix-theoretic features to achieve sub-millisecond inference with controllable accuracy. Numerical experiments demonstrate that our approach achieves a significant speedup over classical methods and sparse solvers while maintaining an acceptable relative error. To the best of our knowledge, this is the first work on leveraging graph learning techniques to address condition number estimation. We believe this work can offer unique insights into the application of AI for numerical methods.  We demonstrate the effectiveness of our approach with a simple neural network architecture and challenging condition number estimation tasks. 

The rest of the paper is structured as follows; Section~\ref{sec:related} discusses related work on condition number estimation; Section~\ref{sec:method} formulates our methodologies, with a focus on the matrix features as part of graph construction,  and model architecture and also provide theoretical support for the condition number estimation; Section~\ref{sec:exps} presents our data generation routines for training and analyzes the simulations on performance; Section~\ref{sec:limitations} illustrates the limitations of our approach, which we leave as future work; Section~\ref{sec:concludion} concludes the paper.

\section{Related Work}\label{sec:related}
Recent advances in artificial intelligence have reshaped the landscape of scientific computing, motivating the integration of learning-based models with simulation, numerical linear algebra, and high-performance computing workflows~\cite{doi:10.1126/science.aef4214}. Deep learning methods have been increasingly explored for scientific computing tasks such as surrogate modeling, operator learning, and solver acceleration~\cite{karniadakis2021physics, li2020fourier}. Motivated by this broader trend, we study the use of graph neural networks to estimate matrix condition numbers for sparse linear systems. 

Estimating the condition number is critical for a linear matrix solve. A well-conditioned linear system generally does not require preconditioning, whereas a poorly conditioned one often does. In addition, a well-conditioned linear system can often be solved with reduced precision in certain steps to improve performance. 

The foundational contribution to estimating the condition number in this line of work is due to Cline et al. \cite{Cline1979}, who introduced a simple yet effective estimator based on the solution of two carefully chosen linear systems. Their approach, which was incorporated into the LINPACK library, produces a lower bound on the one-norm condition number at the cost of only a pair of triangular solves. Although computationally attractive, the method may underestimate the true condition number by one or more orders of magnitude in certain pathological cases.

To estimate matrix condition number, one often employs the power method to compute the spectral norm of a matrix, as introduced by Boyd in 1974 \cite{BOYD197495}.  Following this, a significant improvement appeared shortly thereafter when Hager~\cite{hager1984condition} reformulated the one-norm estimation problem as a convex optimization task. By interpreting $\|\mathbf{A}^{-1}\|_1$ as the maximum of $\|A^{-1}x\|_1$ over the unit ball in the one-norm and deriving an efficient iterative procedure (a specialized variant of the simplex method), Hager's algorithm typically produces substantially tighter and more reliable estimates than the earlier heuristic, while still requiring only a modest number of matrix-vector products after factorization.

Higham~\cite{Higham1987} later provided a comprehensive survey of condition-number estimation specifically for triangular matrices---the form that naturally arises after LU factorization. The survey systematically compares growth-factor bounds, residual-based techniques, and inverse iteration approaches, offering both theoretical insights and practical recommendations regarding the trade-off between accuracy and computational cost. This work serves as a unifying reference and highlights subtle numerical issues that must be addressed when implementing estimators for upper- or lower-triangular factors.

Building directly on Hager's optimization framework, Higham~\cite{higham1988estimating} subsequently proposed a refined one-norm estimator that addresses several practical limitations of the original algorithm. In particular, Higham provided a rigorous convergence analysis, extended the method to complex matrices, introduced safeguards against premature termination, and developed improved stopping criteria that enhance robustness without sacrificing efficiency. These theoretical and algorithmic refinements established a more reliable foundation for condition-number estimation in floating-point arithmetic.

For practical use, Higham~\cite{higham1988fortran} later released portable Fortran~77 implementations (subsequently published as Algorithm~674 in~\cite{10.1145/63522.214391}). These routines implement the improved one-norm estimator together with its application to condition-number computation, include careful handling of both real and complex arithmetic, and incorporate extensive testing and error-handling mechanisms. The routines were later adopted---with minor modifications---into LAPACK and remain the \emph{de facto} standard in many modern numerical libraries, effectively bridging the gap between theoretical development and robust, widely used software.

For the 2-norm condition number estimation, Krylov subspace methods \cite{lanczos1950iteration, arnoldi1951principle, saad2003iterative} provide a coherent and computationally efficient framework for condition-number estimation in large sparse problems. Their cost is dominated by sparse matrix--vector multiplications, orthogonalization, and the solution of small projected eigenvalue or singular-value problems, making them far more practical than explicit inversion or full dense decompositions. Based on a randomized method, \cite{Avron2019} solves the spectral condition number problem of a real matrix via estimating the smallest singular value by solving the least -squares problem using LSQR \cite{10.1145/355984.355989}.

There is existing work \cite{hausner2024neural, pmlr-v265-hausner25a} that focuses on using GNNs to generate and select preconditioners for sparse linear systems, achieving excellent performance. The improved performance by GNNs often requires well-designed handcrafted features. However, existing research does not provide theoretical insights into feature modeling and selection; rather, it focuses on interpretation based on experience.  In Section~\ref{sec:method}, we offer theoretical insight why we choose these matrix features based on how it effects the matrix condition number range.

Research on predicting condition numbers using machine learning methods remains relatively limited. The only one that published which we are aware is \cite{doi:10.1137/1.9781611972757.47}, which proposes an approach for predicting the condition number of sparse matrices using support vector regression. Although this method achieves lower accuracy than direct computation or classical estimation techniques, it can still be sufficient for coarse classification tasks, such as determining whether a matrix is well-conditioned or ill-conditioned.

\section{Methodology}\label{sec:method}

Consider the supervised learning problem: given a training set $\mathcal{D} = \{(\mathbf{A}_i, \kappa^i)\}_{i=1}^N$ where $\mathbf{A}_i \in \mathbb{R}^{n_i \times n_i}$ are sparse matrices and $\|\cdot\|$ denotes arbitrary norm by removing subsript and same for $\kappa$, we seek to construct a rapid estimator by exploiting the factorization
\begin{equation}
\kappa(\mathbf{A}) = \|\mathbf{A}\| \cdot \|\mathbf{A}^{-1}\|.
\label{eq:cond_factorization}
\end{equation}

Observing that $\|\mathbf{A}\|$ is less computationally expensive, e.g., $\|\mathbf{A}\|_1$ can be computed exactly in $\mathcal{O}(\mathrm{nnz}(\mathbf{A}))$ time. Therefore, we reformulate the estimation task: rather than directly predicting $\kappa(\mathbf{A})$, we learn a surrogate function $g: \mathcal{M} \to \mathbb{R}_+$ such that
\begin{equation}\label{eq:learning_objective_hybrid}
g(\mathbf{A}) \approx \|\mathbf{A}^{-1}\| \quad \text{or} \quad g(\mathbf{A}) \approx \kappa(\mathbf{A})  \quad  \forall \mathbf{A} \in \mathcal{M},
\end{equation}
where $\mathcal{M}$ denotes the space of sparse matrices arising in target applications. The condition number estimate is then recovered via the hybrid formula
\begin{equation}
\hat{\kappa}(\mathbf{A}) = \|\mathbf{A}\| \cdot g(\mathbf{A}) \quad \text{or} \quad \hat{\kappa}(\mathbf{A}) =g(\mathbf{A}).
\label{eq:hybrid_estimate}
\end{equation}

Following \cite{10.1145/63522.214391}, this scheme is also useful for solving Sylvester equations $AX + XB = C$, where $A$ is an $m$ by $m$ matrix and $ B$ is an $n$ by $n$ matrix. The linear equations may also be presented in the form of $Px = c$, where $P =I_n  \otimes  A + B^T  \otimes  I_m$ as a Kronecker sum. Following \cite{1102170} and \cite{10.1093/imanum/2.3.303}, the estimation of the norm of $P^{-1}$ is beneficial for assessing the accuracy of a computed solution.

\begin{remark}
This factorization-based approach offers three key advantages: (i) the learning target $\|\mathbf{A}^{-1}\|$ exhibits smaller dynamic range and improved numerical stability compared to the full condition number; (ii) the exact computation of $\|\mathbf{A}\|$ eliminates one source of approximation error; and (iii) the decomposition aligns with the structure of classical iterative refinement algorithms.
\end{remark}

To stabilize numerical conditioning and improve learning dynamics, we adopt the logarithmic transformation
\begin{equation}
\tilde{g}(\mathbf{A}) = \log_{10} g(\mathbf{A}),
\label{eq:log_transform_hybrid}
\end{equation}
So the training target becomes 
\begin{equation*}
\tilde{\nu}_i =
\begin{cases} 
\log_{10} \|\mathbf{A}_i^{-1}\| = \log_{10} \kappa_i - \log_{10} \|\mathbf{A}_i\|, & \text{if } \hat{\kappa}(\mathbf{A}) = \|\mathbf{A}\| \cdot g(\mathbf{A}) \\[2mm]
\log_{10} \|\mathbf{A}_i\| \|\mathbf{A}_i^{-1}\|, & \text{if } \hat{\kappa}(\mathbf{A}) = g(\mathbf{A})
\end{cases}
\end{equation*}

and minimize the empirical risk
\begin{equation}\label{eq:loss_hybrid}
\mathcal{L}(\theta) = \frac{1}{N} \sum_{i=1}^N \left( \tilde{g}(\mathbf{A}_i; \theta) - \tilde{\nu}_i \right)^2,
\end{equation}
where $\theta$ denotes the model parameters. At inference time, the condition number estimate is obtained by exponentiating and combining with the exact norm:
\begin{equation}
\label{eq:inference_hybrid}
\hat{\kappa}(\mathbf{A}) =
\begin{cases}
\|\mathbf{A}\| \cdot 10^{\tilde{g}(\mathbf{A}; \theta^*)}, & \text{if norm scaling is used}, \\[4pt]
10^{\tilde{g}(\mathbf{A}; \theta^*)}, & \text{otherwise}.
\end{cases}
\end{equation}
where $\theta^*$ are the optimized parameters. In the following, we refer to the two formulas in \eqref{eq:inference_hybrid} as the prediction schemes, namely, Scheme 1 and Scheme 2. Scheme 1 is used when the norm of the matrix is provided,  and can be used to estimate the norm of inverse as well. Both schemes can be applied to Demmel condition number estimation \cite{Demmel1987ConditionNumbers}.

\subsection{Matrix Feature Modeling}
In this section, we discuss the details of obtaining our feature modeling for GNNs. The direct application of deep learning to matrix inputs faces two fundamental challenges: (i) variable matrix dimensions exclude fixed-size network architectures, and (ii) the computational cost of processing $\mathcal{O}(n^2)$ or $\mathcal{O}(\text{nnz}(\mathbf{A}))$ entries exceeds the cost of traditional methods. We circumvent these obstacles through a carefully designed feature extraction operator $\Phi: \mathbb{R}^{n \times n} \to \mathbb{R}^d$ that maps matrices to fixed-dimensional feature vectors while preserving essential conditioning information.

We associate each sparse matrix $\mathbf{A} \in \mathbb{R}^{n \times n}$ with a compact $d$-dimensional feature vector $\bm{\phi}(\mathbf{A})$ that encodes a rich collection of structural, numerical, and spectral properties known to be predictive of matrix conditioning and the performance of iterative solvers. We write the mapping $\bm{\phi} : \mathbb{R}^{n \times n} \to \mathbb{R}^{d}, \quad
\mathbf{A} \mapsto \bm{\phi}(\mathbf{A})$.  

Following this, we propose a linear complexity algorithm with respect to $\mathrm{nnz}$ for feature extraction. 
Rather than using all available descriptors, we construct a reduced yet informative representation by selecting a subset of components that capture the dominant scale, extremal behavior, and sparsity characteristics of $\mathbf{A}$. 
This design is motivated by the observation that extremal statistics often provide strong proxies for conditioning and worst-case solver behavior, while avoiding redundancy introduced by moment-based descriptors.

The resulting feature vector is formed by concatenating selected components from complementary groups of descriptors, all computable in $\mathcal{O}(\mathrm{nnz}(\mathbf{A}) + n)$ time:
\begin{equation}\label{eq:phi_concat}
\bm{\phi}(\mathbf{A}) 
= \bm{\phi}_{\mathrm{norm}}(\mathbf{A}) \Vert 
\bm{\phi}_{\mathrm{struc}}(\mathbf{A}) \Vert 
\bm{\phi}_{\mathrm{diag}}(\mathbf{A}) \Vert 
\bm{\phi}_{\mathrm{rsp}}(\mathbf{A}) \Vert 
\bm{\phi}_{\mathrm{nzval}}(\mathbf{A}) \Vert 
\bm{\phi}_{\mathrm{gers}}(\mathbf{A}),
\end{equation}
where $\Vert$ denotes vector concatenation, and each term $\bm{\phi}_{\mathrm{norm}}$, $\bm{\phi}_{\mathrm{struc}}$, $\bm{\phi}_{\mathrm{diag}}$, $\bm{\phi}_{\mathrm{rsp}}$, $\bm{\phi}_{\mathrm{nzval}}$, and $\bm{\phi}_{\mathrm{gers}}$ denotes a group of matrix-level descriptors. These groups capture norm-related scaling, structural information, diagonal properties, row-sparsity pattern, nonzero-value statistics, and Gershgorin-based estimates, respectively. The feature vector $\bm{\phi}(\mathbf{A})$ is formed by concatenating these groups in a fixed order, yielding $\bm{\phi}(\mathbf{A}) \in \mathbb{R}^{11}$. The detail of the matrix features used in this paper follows \tablename~\ref{tab:matrix_features}. Here, $\|\mathbf{A}\|_1 = \max_j \sum_i |a_{ij}|$,
$\|\mathbf{A}\|_\infty = \max_i \sum_j |a_{ij}|$, and
$\|\mathbf{A}\|_F = \sqrt{\sum_{i,j} a_{ij}^2}$.
Moreover, $\mathbf{d} = \mathrm{diag}(\mathbf{A})$,
$\rho_i = \mathrm{nnz}(\mathbf{A}_{i,:})$,
$\mathcal{S} = \{a_{ij} : a_{ij} \neq 0\}$, and
$g_i = \sum_{j \neq i} |a_{ij}|$.
The constant $\epsilon$ is a small number (e.g., $10^{-10}$) that protects against underflow.

\begin{table}[htbp]
\centering
\caption{Matrix feature descriptors used in this work.}
\label{tab:matrix_features}
\renewcommand{\arraystretch}{1.25}
\resizebox{0.9\textwidth}{!}{%
\begin{tabular}{p{0.2\linewidth} p{0.78\linewidth}}
\toprule
\textbf{Category} & \textbf{Feature vector and description} \\
\midrule

\textbf{Matrix norms}
&
Scaled $1$-, $\infty$-, and Frobenius norms together with their ratio quantify global magnitude and row/column imbalance, and provide coarse indicators of matrix scaling:
\begin{equation}\label{eq:feat_norm}
\begin{aligned}
\bm{\phi}_{\mathrm{norm}} &= \bigl(
\log_{10}(\|\mathbf{A}\|_1 + \epsilon),\ 
\log_{10}(\|\mathbf{A}\|_\infty + \epsilon),\ 
\log_{10}(\|\mathbf{A}\|_F + \epsilon),\notag \\
&\quad \log_{10}\bigl(\tfrac{\|\mathbf{A}\|_1}{\|\mathbf{A}\|_\infty + \epsilon}\bigr)
\bigr)^\top \in \mathbb{R}^4,
\end{aligned}
\end{equation}
with the standard definitions $\|\mathbf{A}\|_1 = \max_j \sum_i |a_{ij}|$, $\|\mathbf{A}\|_\infty = \max_i \sum_j |a_{ij}|$, and $\|\mathbf{A}\|_F = \sqrt{\sum_{i,j} a_{ij}^2}$.
\\

\textbf{Structural descriptors}
&
We retain the problem size, which serves as a proxy for computational complexity and scaling effects:
\[
\bm{\phi}_{\mathrm{struc}} = \bigl(
\log_{10}(n + 1)
\bigr)^\top \in \mathbb{R}.
\]
\\

\textbf{Diagonal properties}
&
Let $\mathbf{d} = \mathrm{diag}(\mathbf{A})$. We retain range information (in absolute value), which reflects diagonal scaling and potential ill-conditioning:
\[
\begin{aligned}
\bm{\phi}_{\mathrm{diag}} &= \bigl(
\log_{10}(\min |\mathbf{d}| + \epsilon),\ 
\log_{10}(\max |\mathbf{d}| + \epsilon), \\
&\quad \log_{10}\bigl(\tfrac{\max |\mathbf{d}|}{\min |\mathbf{d}| + \epsilon}\bigr)
\bigr)^\top \in \mathbb{R}^3,
\end{aligned}
\]
where $\epsilon = 10^{-10}$ protects against underflow.
\\

\textbf{Row-sparsity pattern descriptors}
&
Let $\rho_i = \mathrm{nnz}(\mathbf{A}_{i,:})$. We retain the maximal row sparsity, capturing worst-case local density:
\[
\begin{aligned}
\bm{\phi}_{\mathrm{rsp}} &= \bigl(
\log_{10}(\max \bm{\rho} + 1)
\bigr)^\top \in \mathbb{R}.
\end{aligned}
\]
\\

\textbf{Nonzero-value statistics}
&
Let $\mathcal{S} = \{a_{ij} : a_{ij} \neq 0\}$. We retain the maximal magnitude to capture extreme numerical values:
\[
\begin{aligned}
\bm{\phi}_{\mathrm{nzval}} &= \bigl(
\log_{10}(\max |\mathcal{S}| + \epsilon)
\bigr)^\top \in \mathbb{R}.
\end{aligned}
\]
\\

\textbf{Gershgorin-based estimates}
&
Define the Gershgorin radii $g_i = \sum_{j \neq i} |a_{ij}|$. We retain the maximum radius, which provides a coarse upper bound on spectral spread:
\[
\bm{\phi}_{\mathrm{gers}} = \bigl(
\log_{10}(\max \mathbf{g} + \epsilon)
\bigr)^\top \in \mathbb{R}.
\]
\\
\bottomrule
\end{tabular}
}
\end{table}

The assembly of $\bm{\phi}(\mathbf{A})$ as in \eqref{eq:phi_concat} requires only arithmetic operations $\mathcal{O}(\mathrm{nnz}(\mathbf{A}) + n)$, making feature extraction negligible compared to any subsequent condition-number estimation procedure. 
These statistics provide a fixed-dimensional input representation for the GNN and enable an end-to-end learning framework for condition-number prediction. Their relevance can also be justified analytically: several components of the feature vector yield rigorous lower or upper bounds for matrix norms, spectral quantities, and condition numbers under appropriate structural assumptions.

In the theoretical statements below, we  restrict our attention to 1-norm and 2-norm condition numbers, explaining how the selected features contribute to the prediction of condition number. We use the unlogged quantities underlying the logarithmic features. The small stabilizing constant $\epsilon$ is introduced only for numerical robustness in feature computation and is ignored in the analytical bounds. Equivalently, the results should be interpreted in terms of the exact quantities before applying the logarithmic transformation.

The theoretical analysis below focuses on the following 11-dimensional subvector,
denoted by $\widetilde{\bm{\phi}}(\mathbf A)\in\mathbb R^{11}$:
\begin{equation}\label{eq:feature1}
\begin{aligned}
\widetilde{\bm{\phi}}(\mathbf A)=\Bigl(
&\log_{10}(\|\mathbf A\|_1+\epsilon),\ 
\log_{10}(\|\mathbf A\|_\infty+\epsilon),\ 
\log_{10}(\|\mathbf A\|_F+\epsilon), \\
&\log_{10}\Bigl(\frac{\|\mathbf A\|_1+\epsilon}
{\|\mathbf A\|_\infty+\epsilon}\Bigr),\ 
\log_{10}(\min |\mathbf d|+\epsilon), \\
&\log_{10}(\max \mathbf g+\epsilon),\ 
\log_{10}(n+1),\ 
\log_{10}(\max |\mathbf d|+\epsilon), \\
&\log_{10}\Bigl(\frac{\max |\mathbf d|+\epsilon}
{\min |\mathbf d|+\epsilon}\Bigr),\ 
\log_{10}(\max \bm{\rho}+1), \\
&\log_{10}(\max |\mathcal S|+\epsilon)
\Bigr)^\top,
\end{aligned}
\end{equation}
where $\mathbf d=\mathrm{diag}(\mathbf A)$, 
$\rho_i=\mathrm{nnz}(\mathbf A_{i,:})$, 
$\bm{\rho}=(\rho_1,\ldots,\rho_n)^\top$, 
$\mathcal S=\{a_{ij}:a_{ij}\neq 0\}$, and 
$g_i=\sum_{j\neq i}|a_{ij}|$.

This subvector contains all entries of $\bm{\phi}_{\mathrm{norm}}$ and selected extremal statistics from the structural, diagonal, row-sparsity, nonzero-value, and Gershgorin feature groups. These quantities admit direct analytical interpretation: the norm features bound $\|\mathbf A\|_2$ and yield a lower bound for $\kappa_1(\mathbf A)$; the diagonal and Gershgorin features provide spectral bounds under symmetry; and the diagonal range, row sparsity, and maximum nonzero magnitude yield explicit condition-number intervals under a computable dominance margin.

\begin{theorem}[Norm-based bounds available from $\bm{\phi}_{\mathrm{norm}}$]
\label{thm:norm_feature_bounds}
Let $\mathbf{A}\in\mathbb{R}^{n\times n}$, and define
\begin{equation*}
\alpha:=\|\mathbf{A}\|_1,\qquad
\beta:=\|\mathbf{A}\|_\infty,\qquad
\gamma:=\|\mathbf{A}\|_F,\qquad
r:=\frac{\alpha}{\beta}.
\end{equation*}
Then
\begin{equation}
\label{eq:two_norm_interval}
\max\!\left\{\frac{\alpha}{\sqrt{n}},\frac{\beta}{\sqrt{n}},\frac{\gamma}{\sqrt{n}}\right\}
\le
\|\mathbf{A}\|_2
\le
\min\!\left\{\gamma,\sqrt{\alpha\beta}\right\}
=
\min\!\left\{\gamma,\frac{\alpha}{\sqrt{r}}\right\}.
\end{equation}
In particular, the feature group $\bm{\phi}_{\mathrm{norm}}(\mathbf{A})$ determines rigorous lower and upper bounds for $\|\mathbf{A}\|_2$.

If, in addition, $\mathbf{A}$ is nonsingular, then
\begin{equation}
\label{eq:kappa1_lower_from_norm_features}
\kappa_1(\mathbf{A})
\ge
\max\!\left\{
1,\,
\frac{\alpha}{\min\{\gamma,\sqrt{\alpha\beta}\}}
\right\}
=
\max\!\left\{
1,\,
\frac{\alpha}{\gamma},\,
\sqrt{r}
\right\}.
\end{equation}
\end{theorem}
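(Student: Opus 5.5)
The plan is to prove \eqref{eq:two_norm_interval} from the classical norm-equivalence inequalities, and then to deduce \eqref{eq:kappa1_lower_from_norm_features} by combining the upper bound on $\|\mathbf A\|_2$ with a lower bound on $\|\mathbf A^{-1}\|_1$ coming from the spectral radius.

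\emph{Lower bounds in \eqref{eq:two_norm_interval}.} For any $x$ we have $\|x\|_2\le\|x\|_1\le\sqrt n\,\|x\|_2$ and $\|x\|_\infty\le\|x\|_2\le\sqrt n\,\|x\|_\infty$. Applied to induced norms, these give
\[
\|\mathbf A\|_1=\max_{x\neq 0}\frac{\|\mathbf Ax\|_1}{\|x\|_1}\le\max_{x\neq 0}\frac{\sqrt n\,\|\mathbf Ax\|_2}{\|x\|_2}=\sqrt n\,\|\mathbf A\|_2,
\]
and in the same way $\|\mathbf A\|_\infty\le\sqrt n\,\|\mathbf A\|_2$. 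For the Frobenius norm, write $\gamma^2=\sum_i\sigma_i^2\le n\,\sigma_{\max}^2$, which gives $\gamma\le\sqrt n\,\|\mathbf A\|_2$.

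\emph{Upper bounds in \eqref{eq:two_norm_interval}.} First, $\|\mathbf A\|_2=\sigma_{\max}\le\bigl(\sum_i\sigma_i^2\bigr)^{1/2}=\gamma$. Second, $\|\mathbf A\|_2^2=\rho(\mathbf A^\top\mathbf A)\le\|\mathbf A^\top\mathbf A\|_1\le\|\mathbf A^\top\|_1\|\mathbf A\|_1=\beta\alpha$, using that the spectral radius is bounded by any induced norm and that $\|\mathbf A^\top\|_1=\|\mathbf A\|_\infty$. The identity $\sqrt{\alpha\beta}=\alpha/\sqrt r$ is just algebra from $r=\alpha/\beta$. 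Since $\alpha,\beta,\gamma$ are recoverable from $\bm\phi_{\mathrm{norm}}$ by exponentiation, the "in particular" sentence follows.

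\emph{The $\kappa_1$ bound.} This is the step needing an idea, because the naive chain $\|\mathbf A^{-1}\|_1\ge n^{-1/2}\|\mathbf A^{-1}\|_2$ loses a factor $\sqrt n$. Instead I will use the spectral radius. Every eigenvalue $\mu$ of $\mathbf A^{-1}$ satisfies $|\mu|\le\|\mathbf A^{-1}\|_1$; here I note that the induced $1$-norm is the maximal column sum whether the norm is taken over $\mathbb R^n$ or $\mathbb C^n$, so complex eigenvectors are harmless. The eigenvalues of $\mathbf A^{-1}$ are $1/\lambda_i(\mathbf A)$, and $|\lambda_i(\mathbf A)|\le\|\mathbf A\|_2$. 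Hence
\[
\|\mathbf A^{-1}\|_1\ge\rho(\mathbf A^{-1})=\frac{1}{\min_i|\lambda_i(\mathbf A)|}\ge\frac{1}{\|\mathbf A\|_2}.
\]
Combining this with the upper bound from \eqref{eq:two_norm_interval} gives
\[
\kappa_1(\mathbf A)=\alpha\,\|\mathbf A^{-1}\|_1\ge\frac{\alpha}{\|\mathbf A\|_2}\ge\frac{\alpha}{\min\{\gamma,\sqrt{\alpha\beta}\}}=\max\Bigl\{\frac{\alpha}{\gamma},\sqrt{\alpha/\beta}\Bigr\}=\max\Bigl\{\frac{\alpha}{\gamma},\sqrt r\Bigr\}.
\]
The remaining bound $\kappa_1\ge1$ follows from $1=\|\mathbf I\|_1\le\|\mathbf A\|_1\|\mathbf A^{-1}\|_1$. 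Taking the maximum of these bounds yields \eqref{eq:kappa1_lower_from_norm_features}, with both displayed forms coinciding.
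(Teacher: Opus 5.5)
Your proposal is correct. For the interval \eqref{eq:two_norm_interval} it matches the paper's proof: the paper cites $\|\mathbf A\|_1,\|\mathbf A\|_\infty\le\sqrt n\,\|\mathbf A\|_2$, $\|\mathbf A\|_2\le\|\mathbf A\|_F$ and $\|\mathbf A\|_2\le\sqrt{\|\mathbf A\|_1\|\mathbf A\|_\infty}$ as standard facts, whereas you derive them.

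The genuine difference is in the $\kappa_1$ bound, and there your route is the sound one. The paper argues $\|\mathbf A^{-1}\|_1\ge\|\mathbf A^{-1}\|_2=1/\sigma_{\min}(\mathbf A)\ge 1/\sigma_{\max}(\mathbf A)$. The first inequality fails for general nonsingular matrices. Take $\mathbf B=\begin{pmatrix}1&1\\0&\varepsilon\end{pmatrix}$ with $0<\varepsilon<\sqrt2-1$. Then $\|\mathbf B\|_1=1+\varepsilon<\sqrt2\le\|\mathbf B\|_2$, since the first row of $\mathbf B$ has Euclidean norm $\sqrt2$. So $\mathbf A=\mathbf B^{-1}$ breaks that link. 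In general only $\|\mathbf B\|_2\le\sqrt n\,\|\mathbf B\|_1$ holds, which is exactly the $\sqrt n$ loss you flagged. The paper's inequality is valid when $\|\mathbf B\|_\infty\le\|\mathbf B\|_1$, e.g.\ for symmetric matrices as used in Theorem~\ref{thm:diag_range_lower}, but Theorem~\ref{thm:norm_feature_bounds} concerns arbitrary $\mathbf A$.

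Your spectral-radius step, $\|\mathbf A^{-1}\|_1\ge\rho(\mathbf A^{-1})=1/\min_i|\lambda_i(\mathbf A)|\ge 1/\|\mathbf A\|_2$, proves the needed intermediate inequality without that detour. So your argument is a complete proof, whereas the paper's contains a false step even though its conclusion is true.

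Your route also gives more. Since $\min_i|\lambda_i(\mathbf A)|\le\rho(\mathbf A)\le\beta$, it yields $\kappa_1(\mathbf A)\ge\alpha/\rho(\mathbf A)\ge r$. This improves the $\sqrt r$ term to $r$ whenever $r>1$.
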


\begin{proof}
Following the inequalities
\begin{equation*}
\|\mathbf{A}\|_1\le \sqrt{n}\,\|\mathbf{A}\|_2,
\qquad
\|\mathbf{A}\|_\infty\le \sqrt{n}\,\|\mathbf{A}\|_2,
\end{equation*}

We have
\begin{equation*}
\frac{\alpha}{\sqrt{n}}\le \|\mathbf{A}\|_2,
\qquad
\frac{\beta}{\sqrt{n}}\le \|\mathbf{A}\|_2.
\end{equation*}
Also, if $\sigma_1(\mathbf{A}),\dots,\sigma_n(\mathbf{A})$ denote the singular values of $\mathbf{A}$, then
\begin{equation*}
\gamma^2
=
\|\mathbf{A}\|_F^2
=
\sum_{i=1}^n \sigma_i(\mathbf{A})^2
\le
n\,\sigma_{\max}(\mathbf{A})^2
=
n\,\|\mathbf{A}\|_2^2,
\end{equation*}
whence
\begin{equation*}
\frac{\gamma}{\sqrt{n}}\le \|\mathbf{A}\|_2.
\end{equation*}
This proves the left-hand side of \eqref{eq:two_norm_interval}.

For the upper bounds, the standard inequalities
\begin{equation*}
\|\mathbf{A}\|_2\le \|\mathbf{A}\|_F
\qquad\text{and}\qquad
\|\mathbf{A}\|_2\le \sqrt{\|\mathbf{A}\|_1\|\mathbf{A}\|_\infty}
\end{equation*}
yield
\begin{equation*}
\|\mathbf{A}\|_2\le \gamma,
\qquad
\|\mathbf{A}\|_2\le \sqrt{\alpha\beta}.
\end{equation*}
Therefore
\begin{equation*}
\|\mathbf{A}\|_2\le \min\!\left\{\gamma,\sqrt{\alpha\beta}\right\}.
\end{equation*}
Since $r=\alpha/\beta$, we also have
\begin{equation*}
\sqrt{\alpha\beta}=\frac{\alpha}{\sqrt{r}},
\end{equation*}
which completes the proof of \eqref{eq:two_norm_interval}.

Now assume that $\mathbf{A}$ is nonsingular. Then
\begin{equation*}
\|\mathbf{A}^{-1}\|_1\ge \|\mathbf{A}^{-1}\|_2
=
\frac{1}{\sigma_{\min}(\mathbf{A})}
\ge
\frac{1}{\sigma_{\max}(\mathbf{A})}
=
\frac{1}{\|\mathbf{A}\|_2}.
\end{equation*}
Hence
\begin{equation*}
\kappa_1(\mathbf{A})
=
\|\mathbf{A}\|_1\|\mathbf{A}^{-1}\|_1
\ge
\frac{\alpha}{\|\mathbf{A}\|_2}.
\end{equation*}

Substituting the upper bound for $\|\mathbf{A}\|_2$ from \eqref{eq:two_norm_interval} gives
\begin{equation*}
\kappa_1(\mathbf{A})
\ge
\frac{\alpha}{\min\{\gamma,\sqrt{\alpha\beta}\}}
=
\max\!\left\{\frac{\alpha}{\gamma},\sqrt{\frac{\alpha}{\beta}}\right\}
=
\max\!\left\{\frac{\alpha}{\gamma},\sqrt{r}\right\}.
\end{equation*}
Since every matrix condition number satisfies $\kappa_1(\mathbf A)\ge 1$, we obtain the strengthened bound
\begin{equation*}
\kappa_1(\mathbf A)
\ge
\max\!\left\{
1,\,
\frac{\alpha}{\gamma},\,
\sqrt r
\right\}.
\end{equation*}
This proves \eqref{eq:kappa1_lower_from_norm_features}.
\end{proof}

\begin{remark}
\label{rem:no_kappa2_upper_from_norm_only}
The feature group $\bm{\phi}_{\mathrm{norm}}(\mathbf{A})$ does not determine $\sigma_{\min}(\mathbf{A})$. Consequently, it does not by itself provide a nontrivial upper bound for either $\kappa_2(\mathbf{A})$ or $\kappa_1(\mathbf{A})$.
\end{remark}

\begin{theorem}[Gershgorin-based spectral condition number bound in the symmetric case]
\label{thm:gers_bounds_correct}
Let $\mathbf{A}=\mathbf{A}^\top\in\mathbb{R}^{n\times n}$ be symmetric, and define
\begin{equation*}
g_i:=\sum_{j\neq i}|a_{ij}|,\qquad
d_{\min}:=\min_i a_{ii},\qquad
d_{\max}:=\max_i a_{ii},\qquad
R_{\max}:=\max_i g_i.
\end{equation*}
Assume that $d_{\min}>R_{\max}$, in particular, this assumption implies $d_{\min}>0$, so the diagonal entries are positive and the diagonal quantities agree with the absolute-value diagonal features used in $\widetilde{\bm{\phi}}(\mathbf A)$.
Then $\mathbf{A}$ is symmetric positive definite and
\begin{equation}
\label{eq:eig_interval_gers}
d_{\min}-R_{\max}
\le
\lambda_{\min}(\mathbf{A})
\le
\lambda_{\max}(\mathbf{A})
\le
d_{\max}+R_{\max}.
\end{equation}
Consequently,
\begin{equation}
\label{eq:kappa2_gers_exact}
\kappa_2(\mathbf{A})
=
\frac{\lambda_{\max}(\mathbf{A})}{\lambda_{\min}(\mathbf{A})}
\le
\frac{d_{\max}+R_{\max}}{d_{\min}-R_{\max}}.
\end{equation}
Moreover,
\begin{equation}
\label{eq:kappa2_gers_with_norms}
\kappa_2(\mathbf{A})
\le
\frac{\min\{\|\mathbf{A}\|_F,\sqrt{\|\mathbf{A}\|_1\|\mathbf{A}\|_\infty},\,d_{\max}+R_{\max}\}}
{d_{\min}-R_{\max}}.
\end{equation}
\end{theorem}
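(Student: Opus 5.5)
The plan is to apply the Gershgorin circle theorem to the symmetric matrix $\mathbf{A}$. Every eigenvalue $\lambda$ of $\mathbf{A}$ lies in some disc $\{z:|z-a_{ii}|\le g_i\}$. Symmetry makes all eigenvalues real, so each $\lambda$ satisfies $a_{ii}-g_i\le\lambda\le a_{ii}+g_i$ for some index $i$. Bounding $a_{ii}\ge d_{\min}$, $a_{ii}\le d_{\max}$ and $g_i\le R_{\max}$ then gives
\[
d_{\min}-R_{\max}\le\lambda\le d_{\max}+R_{\max}
\]
for every eigenvalue, which is exactly \eqref{eq:eig_interval_gers}.

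The hypothesis $d_{\min}>R_{\max}\ge 0$ makes the lower endpoint strictly positive. Hence every eigenvalue of the symmetric matrix $\mathbf{A}$ is positive, so $\mathbf{A}$ is symmetric positive definite and in particular nonsingular. It also gives $d_{\min}>0$, so $a_{ii}=|a_{ii}|$ for all $i$, and $d_{\min},d_{\max}$ coincide with $\min|\mathbf d|,\max|\mathbf d|$ as claimed.

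For \eqref{eq:kappa2_gers_exact}, I use that for an SPD matrix the singular values equal the eigenvalues. Thus $\|\mathbf{A}\|_2=\lambda_{\max}(\mathbf{A})$ and $\|\mathbf{A}^{-1}\|_2=1/\lambda_{\min}(\mathbf{A})$, so $\kappa_2(\mathbf{A})=\lambda_{\max}/\lambda_{\min}$. The bound follows by enlarging the numerator with the upper Gershgorin bound and shrinking the positive denominator with the lower one.

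For \eqref{eq:kappa2_gers_with_norms}, I write $\kappa_2(\mathbf{A})=\|\mathbf{A}\|_2/\lambda_{\min}(\mathbf{A})$. The numerator is bounded above by $\min\{\|\mathbf{A}\|_F,\sqrt{\|\mathbf{A}\|_1\|\mathbf{A}\|_\infty}\}$ from Theorem~\ref{thm:norm_feature_bounds} (equation \eqref{eq:two_norm_interval}). It is also bounded by $d_{\max}+R_{\max}$, since $\|\mathbf{A}\|_2=\lambda_{\max}$. Taking the minimum of these three upper bounds and dividing by $d_{\min}-R_{\max}\le\lambda_{\min}$ gives the result.

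No step is genuinely hard. The only points that need care are justifying $\|\mathbf{A}\|_2=\lambda_{\max}$, which requires positive definiteness rather than mere symmetry, and noting that the denominator is positive, so the division preserves the inequalities.
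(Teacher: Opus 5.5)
Your proposal is correct and follows essentially the same argument as the paper. Both use Gershgorin intervals, real by symmetry, to get the eigenvalue enclosure and positive definiteness, then $\kappa_2=\lambda_{\max}/\lambda_{\min}$ for the first bound, and finally the upper bounds $\|\mathbf{A}\|_2\le\min\{\|\mathbf{A}\|_F,\sqrt{\|\mathbf{A}\|_1\|\mathbf{A}\|_\infty}\}$ together with $\|\mathbf{A}\|_2=\lambda_{\max}\le d_{\max}+R_{\max}$, divided by $d_{\min}-R_{\max}\le\lambda_{\min}$, for the last inequality.
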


\begin{proof}
By Gershgorin's theorem, every eigenvalue $\lambda$ of $\mathbf{A}$ belongs to at least one interval
\begin{equation*}
[a_{ii}-g_i,\;a_{ii}+g_i].
\end{equation*}
Therefore,
\begin{equation*}
\lambda\ge \min_i(a_{ii}-g_i)\ge d_{\min}-R_{\max},
\end{equation*}
and similarly,
\begin{equation*}
\lambda\le \max_i(a_{ii}+g_i)\le d_{\max}+R_{\max}.
\end{equation*}
This proves \eqref{eq:eig_interval_gers}. Since $d_{\min}>R_{\max}$, all eigenvalues are strictly positive, and hence $\mathbf{A}$ is symmetric positive definite.

For a symmetric positive definite matrix,
\begin{equation*}
\kappa_2(\mathbf{A})
=
\frac{\lambda_{\max}(\mathbf{A})}{\lambda_{\min}(\mathbf{A})},
\end{equation*}
so \eqref{eq:kappa2_gers_exact} follows immediately from \eqref{eq:eig_interval_gers}.

Finally, for any matrix,
\begin{equation*}
\|\mathbf{A}\|_2\le \min\{\|\mathbf{A}\|_F,\sqrt{\|\mathbf{A}\|_1\|\mathbf{A}\|_\infty}\},
\end{equation*}
whereas in the present symmetric positive definite setting,
\begin{equation*}
\lambda_{\max}(\mathbf{A})=\|\mathbf{A}\|_2\le d_{\max}+R_{\max}.
\end{equation*}
Combining these upper bounds for $\lambda_{\max}(\mathbf{A})$ with
\begin{equation*}
\lambda_{\min}(\mathbf{A})\ge d_{\min}-R_{\max}
\end{equation*}
yields \eqref{eq:kappa2_gers_with_norms}.
\end{proof}

\begin{remark}
\label{rem:gers_nonsym}
For a general nonsymmetric matrix, Gershgorin's theorem bounds eigenvalues but not singular values. Accordingly, the implication from $|\lambda|\ge d_{\min}-R_{\max}$ to $\sigma_{\min}(\mathbf{A})\ge d_{\min}-R_{\max}$ 
is false in general. A Gershgorin-based upper bound for $\kappa_2(\mathbf{A})$ therefore requires additional structure, such as symmetry.
\end{remark}

\begin{theorem}[Diagonal-range lower bounds for symmetric positive definite matrices]
\label{thm:diag_range_lower}
Let $\mathbf{A}=\mathbf{A}^\top\in\mathbb{R}^{n\times n}$ be symmetric positive definite, and define
\begin{equation*}
d_{\min}:=\min_{1\le i\le n} a_{ii},
\qquad
d_{\max}:=\max_{1\le i\le n} a_{ii}.
\end{equation*}
Since $\mathbf A$ is positive definite, all diagonal entries are positive. Hence these quantities coincide with the absolute diagonal quantities encoded by the diagonal features, namely
\[
d_{\min}=\min_i |a_{ii}|,
\qquad
d_{\max}=\max_i |a_{ii}|.
\]
Then
\begin{equation}
\label{eq:diag_range_kappa2_lower}
\kappa_2(\mathbf{A})\ge \frac{d_{\max}}{d_{\min}}.
\end{equation}
Consequently,
\begin{equation}
\label{eq:diag_range_kappa1_lower}
\kappa_1(\mathbf{A})\ge \kappa_2(\mathbf{A})\ge \frac{d_{\max}}{d_{\min}}.
\end{equation}

Hence the diagonal-range feature, whose unlogged counterpart is
\begin{equation*}
\frac{d_{\max}}{d_{\min}},
\end{equation*}
provides a rigorous lower bound for both $\kappa_2(\mathbf{A})$ and $\kappa_1(\mathbf{A})$, up to the numerical stabilizer $\epsilon$ used in feature computation.
\end{theorem}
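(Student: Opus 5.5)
We will prove \eqref{eq:diag_range_kappa2_lower} by the Rayleigh quotient characterization of the extreme eigenvalues of a symmetric matrix. Since $\mathbf A$ is symmetric positive definite, the eigenvalues are real and positive and
\[
\lambda_{\min}(\mathbf A)=\min_{\mathbf x\neq 0}\frac{\mathbf x^\top\mathbf A\mathbf x}{\mathbf x^\top\mathbf x},
\qquad
\lambda_{\max}(\mathbf A)=\max_{\mathbf x\neq 0}\frac{\mathbf x^\top\mathbf A\mathbf x}{\mathbf x^\top\mathbf x}.
\]
Taking $\mathbf x=\mathbf e_i$ gives $\mathbf e_i^\top\mathbf A\mathbf e_i=a_{ii}$. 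Hence $\lambda_{\min}(\mathbf A)\le a_{ii}\le\lambda_{\max}(\mathbf A)$ for every $i$. In particular $a_{ii}\ge\lambda_{\min}>0$, which also justifies the identification $d_{\min}=\min_i|a_{ii}|$ and $d_{\max}=\max_i|a_{ii}|$ made in the statement.

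Choosing $i$ to attain $d_{\max}$ gives $\lambda_{\max}\ge d_{\max}$, and choosing $i$ to attain $d_{\min}$ gives $\lambda_{\min}\le d_{\min}$. For symmetric positive definite $\mathbf A$ we have $\kappa_2(\mathbf A)=\lambda_{\max}/\lambda_{\min}$, as already used in the proof of Theorem~\ref{thm:gers_bounds_correct}. Dividing the two inequalities, which is legitimate because all quantities are positive, yields $\kappa_2(\mathbf A)\ge d_{\max}/d_{\min}$.

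For \eqref{eq:diag_range_kappa1_lower}, the remaining step is $\kappa_1(\mathbf A)\ge\kappa_2(\mathbf A)$ for symmetric $\mathbf A$. For any matrix $\mathbf B$, the bound used in Theorem~\ref{thm:norm_feature_bounds} gives $\|\mathbf B\|_2\le\sqrt{\|\mathbf B\|_1\|\mathbf B\|_\infty}$. If $\mathbf B$ is symmetric, then $\|\mathbf B\|_\infty=\|\mathbf B^\top\|_1=\|\mathbf B\|_1$, so $\|\mathbf B\|_2\le\|\mathbf B\|_1$. We apply this to $\mathbf B=\mathbf A$ and to $\mathbf B=\mathbf A^{-1}$, which is symmetric because $\mathbf A$ is. Multiplying the two inequalities gives $\kappa_2(\mathbf A)\le\kappa_1(\mathbf A)$, and chaining with the first part completes the proof.

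The only point needing care is this last step. For a general nonsymmetric matrix, $\|\mathbf B\|_2\le\|\mathbf B\|_1$ can fail, so symmetry of both $\mathbf A$ and $\mathbf A^{-1}$ must be invoked explicitly. The rest is routine. As a closing remark, we would note that with the stabilizer $\epsilon$ the computed feature $(d_{\max}+\epsilon)/(d_{\min}+\epsilon)$ only slightly underestimates $d_{\max}/d_{\min}$, so it remains a valid lower bound.
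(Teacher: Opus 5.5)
Your proof is correct and matches the paper's argument essentially step for step: coordinate-vector Rayleigh quotients give $\lambda_{\max}(\mathbf A)\ge d_{\max}$ and $\lambda_{\min}(\mathbf A)\le d_{\min}$. The inequality $\kappa_1(\mathbf A)\ge\kappa_2(\mathbf A)$ then follows from $\|\mathbf B\|_2\le\sqrt{\|\mathbf B\|_1\|\mathbf B\|_\infty}$ applied to the symmetric matrices $\mathbf A$ and $\mathbf A^{-1}$. Your side remark on the stabilizer is also right, since $(d_{\max}+\epsilon)/(d_{\min}+\epsilon)\le d_{\max}/d_{\min}$ is equivalent to $d_{\min}\le d_{\max}$, but drop ``only slightly'': the underestimate can be large when $d_{\min}$ is small compared with $\epsilon$.
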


\begin{proof}
Since $\mathbf{A}$ is symmetric positive definite,
\begin{equation*}
\kappa_2(\mathbf{A})=\frac{\lambda_{\max}(\mathbf{A})}{\lambda_{\min}(\mathbf{A})}.
\end{equation*}
Let $e_i$ denote the $i$th coordinate vector. By the Rayleigh quotient characterization,
\begin{equation*}
\lambda_{\max}(\mathbf{A})\ge e_k^\top \mathbf{A} e_k = a_{kk}=d_{\max}
\end{equation*}
for an index $k$ attaining the maximum diagonal entry, and similarly
\begin{equation*}
\lambda_{\min}(\mathbf{A})\le e_\ell^\top \mathbf{A} e_\ell = a_{\ell\ell}=d_{\min}
\end{equation*}
for an index $\ell$ attaining the minimum diagonal entry. Therefore
\begin{equation*}
\kappa_2(\mathbf{A})
=
\frac{\lambda_{\max}(\mathbf{A})}{\lambda_{\min}(\mathbf{A})}
\ge
\frac{d_{\max}}{d_{\min}},
\end{equation*}
which proves \eqref{eq:diag_range_kappa2_lower}.

Finally, since $\mathbf A$ is symmetric positive definite, both $\mathbf A$ and $\mathbf A^{-1}$ are symmetric. Therefore,
\[
\|\mathbf A\|_1=\|\mathbf A\|_\infty,
\qquad
\|\mathbf A^{-1}\|_1=\|\mathbf A^{-1}\|_\infty.
\]
Using the standard inequality
\[
\|\mathbf B\|_2\le \sqrt{\|\mathbf B\|_1\|\mathbf B\|_\infty},
\]
we obtain
\[
\|\mathbf A\|_2\le \|\mathbf A\|_1,
\qquad
\|\mathbf A^{-1}\|_2\le \|\mathbf A^{-1}\|_1.
\]
Hence
\begin{equation*}
\kappa_2(\mathbf{A})
=
\|\mathbf{A}\|_2\|\mathbf{A}^{-1}\|_2
\le
\|\mathbf{A}\|_1\|\mathbf{A}^{-1}\|_1
=
\kappa_1(\mathbf{A}),
\end{equation*}
and \eqref{eq:diag_range_kappa1_lower} follows.
\end{proof}

\begin{theorem}[Condition-number intervals from diagonal range, row sparsity, and nonzero magnitude]
\label{thm:rsp_nzval_bounds}
Let $\mathbf{A}=\mathbf{A}^\top\in\mathbb{R}^{n\times n}$ be symmetric with positive diagonal, and define
\begin{equation*}
d_{\min}:=\min_i a_{ii},
\qquad
d_{\max}:=\max_i a_{ii},
\qquad
\rho_{\max}:=\max_i \mathrm{nnz}(\mathbf{A}_{i,:}),
\qquad
v_{\max}:=\max_{a_{ij}\neq 0}|a_{ij}|.
\end{equation*}
Assume that
\begin{equation}
\label{eq:sparsity_value_dominance}
d_{\min}>(\rho_{\max}-1)v_{\max}.
\end{equation}
Then $\mathbf{A}$ is symmetric positive definite and
\begin{equation}
\label{eq:kappa2_rsp_nzval_interval}
\frac{d_{\max}}{d_{\min}}
\le
\kappa_2(\mathbf{A})
\le
\frac{d_{\max}+(\rho_{\max}-1)v_{\max}}
{d_{\min}-(\rho_{\max}-1)v_{\max}}.
\end{equation}
Moreover,
\begin{equation}
\label{eq:kappa1_rsp_nzval_interval}
\frac{d_{\max}}{d_{\min}}
\le
\kappa_1(\mathbf{A})
\le
\frac{d_{\max}+(\rho_{\max}-1)v_{\max}}
{d_{\min}-(\rho_{\max}-1)v_{\max}}.
\end{equation}
All quantities in \eqref{eq:kappa2_rsp_nzval_interval}--\eqref{eq:kappa1_rsp_nzval_interval} are the unlogged counterparts of the corresponding components in $\bm{\phi}_{\mathrm{diag}}(\mathbf{A})$, $\bm{\phi}_{\mathrm{rsp}}(\mathbf{A})$, and $\bm{\phi}_{\mathrm{nzval}}(\mathbf{A})$, up to the numerical stabilizer $\epsilon$ used in feature computation.
\end{theorem}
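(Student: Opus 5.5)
The plan is to reduce everything to the two preceding theorems, plus one extra ingredient for the $\kappa_1$ upper bound. Write $\delta:=(\rho_{\max}-1)v_{\max}$.

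\textbf{Step 1: bound the Gershgorin radii by $\delta$.} Every diagonal entry is positive, hence a nonzero of its row. So row $i$ has at most $\rho_i-1\le\rho_{\max}-1$ off-diagonal nonzeros, each of modulus at most $v_{\max}$. Therefore
\[
g_i=\sum_{j\neq i}|a_{ij}|\le(\rho_i-1)v_{\max}\le\delta,
\qquad\text{so}\qquad R_{\max}\le\delta<d_{\min}
\]
by \eqref{eq:sparsity_value_dominance}.

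\textbf{Step 2: the $\kappa_2$ interval.} The hypothesis $d_{\min}>R_{\max}$ of Theorem~\ref{thm:gers_bounds_correct} now holds. That theorem gives that $\mathbf A$ is symmetric positive definite and that $\kappa_2(\mathbf A)\le (d_{\max}+R_{\max})/(d_{\min}-R_{\max})$. The right-hand side is increasing in $R_{\max}$ on $[0,d_{\min})$, so we may replace $R_{\max}$ by $\delta$, which gives the upper half of \eqref{eq:kappa2_rsp_nzval_interval}. The lower half is exactly Theorem~\ref{thm:diag_range_lower}, which applies now that $\mathbf A$ is known to be SPD. The same theorem also gives $\kappa_1(\mathbf A)\ge\kappa_2(\mathbf A)\ge d_{\max}/d_{\min}$, which is the lower half of \eqref{eq:kappa1_rsp_nzval_interval}.

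\textbf{Step 3: the upper bound for $\kappa_1$ (the main obstacle).} Since $\kappa_1\ge\kappa_2$, the $\kappa_2$ upper bound cannot be transferred, so I will bound the two factors of $\kappa_1$ directly.

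For the first factor, every column sum satisfies $\sum_i|a_{ij}|=a_{jj}+g_j\le d_{\max}+\delta$, using symmetry to identify column and row radii. Hence $\|\mathbf A\|_1\le d_{\max}+\delta$.

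For the inverse I will use a Varah-type argument. Take any $x\neq0$ and pick $i$ with $|x_i|=\|x\|_\infty$. Then
\[
|(\mathbf Ax)_i|\ge a_{ii}|x_i|-\sum_{j\ne i}|a_{ij}||x_j|\ge (a_{ii}-g_i)\|x\|_\infty\ge(d_{\min}-\delta)\|x\|_\infty .
\]
Thus $\|\mathbf A x\|_\infty\ge(d_{\min}-\delta)\|x\|_\infty$, and therefore $\|\mathbf A^{-1}\|_\infty\le 1/(d_{\min}-\delta)$. Since $\mathbf A^{-1}$ is symmetric, $\|\mathbf A^{-1}\|_1=\|\mathbf A^{-1}\|_\infty$.

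Multiplying the two bounds gives
\[
\kappa_1(\mathbf A)\le \frac{d_{\max}+\delta}{d_{\min}-\delta},
\]
which is the upper half of \eqref{eq:kappa1_rsp_nzval_interval}.

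Finally, I will note that each of $d_{\min}$, $d_{\max}$, $\rho_{\max}$ and $v_{\max}$ is the unlogged quantity behind the corresponding feature. This uses positivity of the diagonal, so that $d_{\min}=\min_i|a_{ii}|$ and $d_{\max}=\max_i|a_{ii}|$.
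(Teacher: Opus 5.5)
Your proposal is correct and follows essentially the same route as the paper: bound each Gershgorin radius by $(\rho_{\max}-1)v_{\max}$ using the nonzero diagonal, and use Gershgorin under symmetry for positive definiteness and the $\kappa_2$ upper bound. Both lower bounds then come from Theorem~\ref{thm:diag_range_lower}, and the $\kappa_1$ upper bound from $\|\mathbf A\|_\infty\le d_{\max}+(\rho_{\max}-1)v_{\max}$ combined with Varah's bound and $\|\cdot\|_1=\|\cdot\|_\infty$ for the symmetric $\mathbf A$ and $\mathbf A^{-1}$. The only differences are cosmetic: you reach the $\kappa_2$ bound through Theorem~\ref{thm:gers_bounds_correct} plus monotonicity in $R_{\max}$ rather than re-applying Gershgorin directly, and you prove Varah's bound inline instead of citing it.
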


\begin{proof}
For each row $i$, let
\begin{equation*}
g_i:=\sum_{j\neq i}|a_{ij}|.
\end{equation*}
Although $v_{\max}$ is taken over all nonzero entries, including diagonal entries, it is still a valid upper bound for the magnitude of every off-diagonal nonzero entry. Since row $i$ contains at most $\rho_i-1$ off-diagonal nonzeros, each of magnitude at most $v_{\max}$, we have
\begin{equation*}
g_i\le (\rho_i-1)v_{\max}\le (\rho_{\max}-1)v_{\max}.
\end{equation*}
Hence
\begin{equation*}
a_{ii}-g_i
\ge
d_{\min}-(\rho_{\max}-1)v_{\max}
>0
\end{equation*}
by \eqref{eq:sparsity_value_dominance}. Therefore $\mathbf{A}$ is strictly diagonally dominant by rows. Because $\mathbf{A}$ is symmetric, Gershgorin's theorem implies that every eigenvalue $\lambda$ of $\mathbf{A}$ lies in an interval
\begin{equation*}
[a_{ii}-g_i,\; a_{ii}+g_i]
\subseteq
\bigl[d_{\min}-(\rho_{\max}-1)v_{\max},\;
d_{\max}+(\rho_{\max}-1)v_{\max}\bigr].
\end{equation*}
It follows that
\begin{equation*}
0<d_{\min}-(\rho_{\max}-1)v_{\max}
\le \lambda_{\min}(\mathbf{A})
\le \lambda_{\max}(\mathbf{A})
\le d_{\max}+(\rho_{\max}-1)v_{\max},
\end{equation*}
so $\mathbf{A}$ is symmetric positive definite. Consequently,
\begin{equation*}
\kappa_2(\mathbf{A})
=
\frac{\lambda_{\max}(\mathbf{A})}{\lambda_{\min}(\mathbf{A})}
\le
\frac{d_{\max}+(\rho_{\max}-1)v_{\max}}
{d_{\min}-(\rho_{\max}-1)v_{\max}}.
\end{equation*}
The lower bound in \eqref{eq:kappa2_rsp_nzval_interval} follows from Theorem~\ref{thm:diag_range_lower}. This proves \eqref{eq:kappa2_rsp_nzval_interval}.

Since $\mathbf{A}$ is symmetric, $\mathbf{A}^{-1}$ is also symmetric, and therefore
\begin{equation*}
\|\mathbf{A}\|_1=\|\mathbf{A}\|_\infty,
\qquad
\|\mathbf{A}^{-1}\|_1=\|\mathbf{A}^{-1}\|_\infty.
\end{equation*}
Moreover,
\begin{equation*}
\|\mathbf{A}\|_\infty
=
\max_i\sum_j |a_{ij}|
\le
d_{\max}+(\rho_{\max}-1)v_{\max}.
\end{equation*}
Also,
\begin{equation*}
\min_i(a_{ii}-g_i)\ge d_{\min}-(\rho_{\max}-1)v_{\max}>0.
\end{equation*}
Hence, by Varah's bound for strictly row diagonally dominant matrices,
\begin{equation*}
\|\mathbf{A}^{-1}\|_\infty
\le
\frac{1}{d_{\min}-(\rho_{\max}-1)v_{\max}}.
\end{equation*}
Therefore
\begin{equation*}
\kappa_1(\mathbf{A})
=
\|\mathbf{A}\|_1\|\mathbf{A}^{-1}\|_1
=
\|\mathbf{A}\|_\infty\|\mathbf{A}^{-1}\|_\infty
\le
\frac{d_{\max}+(\rho_{\max}-1)v_{\max}}
{d_{\min}-(\rho_{\max}-1)v_{\max}}.
\end{equation*}
Finally, the lower bound in \eqref{eq:kappa1_rsp_nzval_interval} follows from Theorem~\ref{thm:diag_range_lower}. This completes the proof.
\end{proof}

The dominance margin is critical for condition-number control. The following remark gives a simple example illustrating why such a margin is necessary.
\begin{remark}[Why a dominance margin is necessary]
\label{rem:need_margin}
The separation condition in Theorem~\ref{thm:rsp_nzval_bounds} is essential. Indeed, consider
\begin{equation*}
\mathbf{A}_\varepsilon
=
\begin{bmatrix}
1 & 1-\varepsilon\\
1-\varepsilon & 1
\end{bmatrix},
\qquad 0<\varepsilon<1.
\end{equation*}
Then
\begin{equation*}
d_{\min}=d_{\max}=1,\qquad
\rho_{\max}=2,\qquad
v_{\max}=1,
\end{equation*}
all of which remain bounded independently of $\varepsilon$, whereas
\begin{equation*}
\kappa_2(\mathbf{A}_\varepsilon)=\frac{2-\varepsilon}{\varepsilon}\to\infty
\qquad\text{as }\varepsilon\rightarrow 0.
\end{equation*}
Thus diagonal range, row sparsity, and maximum entry magnitude do not by themselves control the condition number unless one imposes a positive dominance margin such as \eqref{eq:sparsity_value_dominance}.
\end{remark}

The results above show that the selected 11-dimensional subvector
$\widetilde{\bm{\phi}}(\mathbf A)$ has explicit analytical relevance to matrix conditioning: it provides norm bounds, spectral bounds under symmetry, and condition-number intervals under diagonal-dominance-type assumptions.

\begin{remark}[Scope of the analytical bounds]
The bounds derived above do not imply that the selected features fully determine the condition number of an arbitrary matrix. In particular, without additional structural assumptions, such as symmetry, positive definiteness, or a positive diagonal-dominance margin, the minimum singular value may be arbitrarily small while the selected extremal features remain bounded. Therefore, the theoretical results should be interpreted as partial analytical justification for the feature design rather than as a complete characterization of the condition number.
\end{remark}



\subsection{Graph Neural Network Architecture}

To leverage the structural information encoded in the sparsity pattern of $\mathbf{A}$, we represent each matrix as an attributed graph
\[
\mathcal{G} = (\mathcal{V}, \mathcal{E}, \mathbf{X}, \bm{\phi}),
\]
where:
\begin{itemize}
\item $\mathcal{V} = \{1, \ldots, n\}$ is the set of nodes corresponding to matrix rows/columns;
\item $\mathcal{E} = \{(i,j) : a_{ij} \neq 0\}$ is the directed edge set induced by the nonzero pattern of $\mathbf{A}$;
\item $\mathbf{X} \in \mathbb{R}^{n \times 2}$ is the node feature matrix with rows
\begin{equation}\label{eq:node_features}
\mathbf{x}_i =
\bigl(
\log_{10}(|a_{ii}| + \epsilon),\
\log_{10}(\rho_i + 1)
\bigr)^\top,
\end{equation}
where $\rho_i = \mathrm{nnz}(\mathbf{A}_{i,:})$ denotes the row sparsity;
\item $\bm{\phi}(\mathbf{A}) \in \mathbb{R}^{d}$ is the global feature vector defined in \eqref{eq:phi_concat}.
\end{itemize}

To instantiate the graph neural network (GNN) component, we employ a $K$-layer graph convolutional network (GCN) \cite{kipf2017semi}, a representative message-passing GNN architecture, to learn node embeddings from the unweighted graph induced by the sparsity pattern of $\mathbf{A}$. To be specific, the graph convolutional layers operate on the unweighted connectivity pattern specified by $\mathcal{E}$; that is, the message-passing graph uses the sparsity structure of $\mathbf{A}$ but does not assign additional learned or matrix-value-dependent edge attributes. One may seek improved performance via including edge attributes $e_{ij}=\log_{10}(|a_{ij}|+\epsilon)$ but in our GCN implementation these attributes are not directly used by the message-passing layers. The GCN branch uses the sparsity pattern through edge index while numerical magnitude information is incorporated through node features and graph-level statistical features. Numerical magnitude information is incorporated through the node features in \eqref{eq:node_features} and the global matrix descriptor $\bm{\phi}(\mathbf{A})$.

Additionally, when required by Scheme~1 in the inference formula \eqref{eq:inference_hybrid}, we store the exact matrix norm $\|\mathbf{A}\|_p$ as graph-level metadata. In particular, this corresponds to $\|\mathbf{A}\|_1$ for the 1-norm condition number and $\|\mathbf{A}\|_2$ for the 2-norm condition number.

\subsubsection{Message Passing Layers}
 The GCN branch uses the edge set $\mathcal{E}$ induced by the nonzero pattern, while matrix-value information is incorporated through the node features in \eqref{eq:node_features} and the global descriptor $\bm{\phi}(\mathbf{A})$.

The initial node embedding is obtained by a linear encoder followed by a nonlinear activation:
\begin{equation}
\mathbf{h}_i^{(0)}
=
\sigma\left(
\mathbf{W}_{\mathrm{node}}\mathbf{x}_i
+
\mathbf{b}_{\mathrm{node}}
\right),
\label{eq:node_encoder}
\end{equation}
where $\mathbf{W}_{\mathrm{node}}\in\mathbb{R}^{\ell\times 2}$, $\mathbf{b}_{\mathrm{node}}\in\mathbb{R}^{\ell}$, $\ell$ is the hidden dimension, and $\sigma(\cdot)$ denotes the ReLU activation.

Following the standard GCN normalization, we augment the graph with self-loops:
\[
\tilde{\mathcal{E}}
=
\mathcal{E}
\cup
\{(i,i):i\in\mathcal{V}\}.
\]
Let $\tilde{\mathcal{N}}^{-}(i) = \{j:(j,i)\in\tilde{\mathcal{E}}\}$
be the set of source nodes that send messages to node $i$, and let
$\tilde d_i = |\tilde{\mathcal{N}}^{-}(i)|$ be the corresponding degree. The $k$-th GCN layer is then written as
\begin{equation}
\mathbf{h}_i^{(k+1)}
=
\sigma\left(
\sum_{j\in \tilde{\mathcal{N}}^{-}(i)}
\frac{1}{\sqrt{\tilde d_i\tilde d_j}}\,
\mathbf{W}^{(k)}\mathbf{h}_j^{(k)}
+
\mathbf{b}^{(k)}
\right),
\qquad
k=0,\ldots,K-1,
\label{eq:gcn_propagation}
\end{equation}
where $\mathbf{W}^{(k)}\in\mathbb{R}^{\ell\times\ell}$ and $\mathbf{b}^{(k)}\in\mathbb{R}^{\ell}$ are learnable parameters. This propagation rule corresponds to message passing over the unweighted graph induced by the sparsity pattern of $\mathbf{A}$.

\subsubsection{Graph-Level Aggregation and Prediction}

The final node embeddings $\{\mathbf{h}_i^{(K)}\}_{i=1}^n$ are mapped to a graph-level representation using permutation-invariant mean and max readouts:
\begin{equation}
\mathbf{z}_{\mathrm{mean}}
=
\frac{1}{n}
\sum_{i=1}^n
\mathbf{h}_i^{(K)},
\qquad
\mathbf{z}_{\max}
=
\max_{1\leq i\leq n}
\mathbf{h}_i^{(K)},
\label{eq:graph_pooling}
\end{equation}
where the maximum is taken element-wise. In parallel, the global matrix descriptor $\bm{\phi}(\mathbf{A})$ is encoded by a separate fully connected branch:
\begin{equation}
\mathbf{u}
=
\sigma\left(
\mathbf{W}_{\mathrm{global}}\bm{\phi}(\mathbf{A})
+
\mathbf{b}_{\mathrm{global}}
\right),
\label{eq:global_encoder}
\end{equation}
with $\mathbf{W}_{\mathrm{global}}\in\mathbb{R}^{\ell\times d}$ and $\mathbf{b}_{\mathrm{global}}\in\mathbb{R}^{\ell}$.

The graph-level representation is formed by concatenating the local structural readouts and the encoded global descriptor:
\begin{equation}
\mathbf{q}(\mathbf{A})
=
\left[
\mathbf{z}_{\mathrm{mean}};
\mathbf{z}_{\max};
\mathbf{u}
\right]
\in\mathbb{R}^{3\ell}.
\label{eq:combined_representation}
\end{equation}
This representation is processed by a multilayer prediction head:
\begin{equation}
\begin{aligned}
\mathbf{s}^{(0)}
&=
\mathbf{q}(\mathbf{A}),\\
\mathbf{s}^{(m)}
&=
\sigma\left(
\mathrm{Drop}_q
\left(
\mathbf{W}_{\mathrm{out}}^{(m)}
\mathbf{s}^{(m-1)}
+
\mathbf{b}_{\mathrm{out}}^{(m)}
\right)
\right),
\qquad
m=1,\ldots,L-1,\\
\tilde{g}(\mathbf{A};\theta)
&=
\mathbf{W}_{\mathrm{out}}^{(L)}
\mathbf{s}^{(L-1)}
+
\mathbf{b}_{\mathrm{out}}^{(L)}.
\end{aligned}
\label{eq:gnn_predictor_general}
\end{equation}
Here $\mathrm{Drop}_q$ denotes dropout with rate $q$ \cite{srivastava2014dropout}.

For the prediction scheme 1, the scalar output $\tilde{g}(\mathbf{A};\theta)$ is then used in the corresponding inference formula:
\begin{equation*}
\hat{\kappa}_p(\mathbf{A})
=
\|\mathbf{A}\|_p
\cdot
10^{\tilde{g}(\mathbf{A};\theta^*)},
\label{eq:final_estimate}
\end{equation*}
where $\|\mathbf{A}\|_p$ is supplied explicitly in \eqref{eq:final_estimate}, thus the neural network is trained to approximate the remaining inverse-norm factor in logarithmic scale.

While for prediction scheme 2, the scalar output follows:
\begin{equation*}
\hat{\kappa}_p(\mathbf{A})
= 
10^{\tilde{g}(\mathbf{A};\theta^*)}.
\end{equation*}

The resulting two-stream architecture combines a local GCN branch, which captures the sparsity connectivity of $\mathbf{A}$, with a global multilayer perceptron branch, which incorporates matrix-level norm, diagonal, magnitude, and sparsity statistics.


\section{Numerical Experiments}\label{sec:exps}
We evaluate the performance of our method against standard approaches for condition number estimation. Unless otherwise noted, all methods are implemented in PyTorch. Experiments and timing measurements were performed on a server equipped with 1 TB of RAM, dual Intel Xeon Gold 6330 processors (56 cores and 112 threads per processor, 2.00 GHz base frequency), and four NVIDIA A100 GPUs (80 GB PCIe each). All computations were executed on the a single GPU.

While condition-number estimation for sparse matrices has well-established CPU-based approaches, to the best of our knowledge, no widely adopted general-purpose GPU routine is available for this task in mainstream sparse linear algebra libraries. In particular, the optimized solution in Python is lacking. The mainstream Python GPU libraries do not currently provide a dedicated sparse-matrix condition-number API. PyTorch exposes \texttt{torch.linalg.cond} for dense tensors as a direct call for calculating condition number, while CuPy provides sparse SVD and Krylov subroutines that can be used to construct condition-number estimators, but not a direct sparse condition-number routine. In PyTorch, as of early 2026, \texttt{torch.linalg.cond} is not documented as a sparse-specialized condition-number estimator. According to its documented behavior, it computes condition numbers using standard dense linear algebra operations: singular-value computations for the $2$-norm-related cases, and matrix norm plus matrix inverse computations for norms such as the Frobenius, nuclear, $1$-, and $\infty$-norms.
Although CuPy\footnote{version 14.0.1, \url{https://cupy.dev/}} provides cuSPARSE-backed sparse matrix operations, our preliminary experiments showed that it was consistently slower than PyTorch for the algorithms considered in this study, across all tested data scales. We therefore restrict the subsequent evaluation to PyTorch in 2.12.1 with CUDA12.6.

To ensure a comprehensive and fair evaluation, we consider several implementations of the competing approaches. For the computation of the 1-norm condition number, we consider the following methods:
$\mathcircled{1}$ computing the exact value using \texttt{torch.linalg.cond} on GPU;
$\mathcircled{2}$ using \texttt{scipy.sparse.linalg.norm} to compute the matrix 1-norm, \texttt{scipy.sparse.linalg.splu} to perform linear solves required for matrix inversion, and \texttt{scipy.sparse.linalg.onenormest}\footnote{A Block Algorithm for Matrix 1-Norm Estimation, note that it is to compute a lower bound of the 1-norm of a sparse array (matrix).} to estimate the 1-norm of the matrix inverse; $\mathcircled{3}$ applying the Hager–Higham algorithm with \texttt{torch.linalg.lu\_factor} for LU solves, while keeping the remaining operations implemented in PyTorch on GPU. The PyTorch-based implementations for 1-norm condition number evaluation are performed on both the CPU and the GPU.  Methods $\mathcircled{2}$ and $\mathcircled{3}$ are referred to as Higham-SciPy and Hager–Higham, respectively, in the following.

For the computation of the 2-norm condition number, we consider two approaches: $\mathcircled{4}$ computing the exact value using \texttt{torch.linalg.cond} on GPU; $\mathcircled{5}$ a GPU-based Golub–Kahan method, i.e., a condition number estimate based on Ritz singular values computed via Golub–Kahan bidiagonalization; $\mathcircled{6}$ SciPy's sparse singular value decomposition (\texttt{scipy.sparse.linalg.svds}), which extracts the largest and smallest singular values of the sparse matrix. Methods $\mathcircled{1}$ and $\mathcircled{4}$ are referred to as the exact approaches.  Our proposed approach is referred to as GNN in the following. Our Golub–Kahan method takes the Golub–Kahan Lanczos bidiagonalization for estimating the 2-norm condition number of a matrix. To enhance numerical stability, full reorthogonalization is applied during the iterations using the Modified Gram–Schmidt procedure. The sparse matrices are stored as PyTorch sparse tensors to enable GPU-accelerated computations.

\subsection{Training Protocols}

We minimize the loss function~\eqref{eq:loss_hybrid} augmented with $L^2$ regularization:
\begin{equation*}\label{eq:reg_loss}
\mathcal{L}_\text{reg}(\theta) = \mathcal{L}(\theta) + \lambda \sum_{L} \|\mathbf{W}^{(L)}\|_F^2,
\end{equation*}
with regularization parameter $\lambda = 10^{-5}$. We use the Adam algorithm optimizer~\cite{kingma2015adam} with learning rate $\alpha = 10^{-3}$ and default momentum parameters $\beta_1 = 0.9$, $\beta_2 = 0.999$. Training proceeds for up to $100$ epochs with mini-batch size $32$, employing early stopping with patience $20$ based on validation set performance.

The learning rate is adaptively reduced by a factor of $0.5$ when validation loss plateaus for $10$ consecutive epochs, implementing the ReduceLROnPlateau schedule~\cite{paszke2019pytorch}. Gradient clipping with threshold $\tau = 1.0$ is applied to prevent exploding gradients during training. Dropout regularization~\cite{srivastava2014dropout} with a rate of $0.1$ is applied after the first two hidden layers of the prediction head to mitigate overfitting prone to model training.

For the GNN-based estimator, we report two timing metrics.
GNN (total) denotes the full pipeline runtime, including conversion of the sparse matrix into the graph representation required by the network, exact computation of $\|A\|_2$, and subsequent model inference.
GNN (inference) denotes only the pure forward-pass time of the trained GNN, excluding preprocessing and norm computation.

\subsection{Evaluation Metrics}
For the reference result on the exact value of the condition number, we use the function \texttt{torch.linalg.cond} to compute the condition number $\kappa_p(A) = \|\mathbf{A}\|_p \cdot \|\mathbf{A}^{-1}\|_p$ depending on the chosen norm. When $p=2$ (i.e., 2-norm), the library employs singular value decomposition to factorize $A$, and defines the 2-norm condition number as the ratio of the largest to the smallest singular value, i.e., $\kappa_2(A) = \sigma_{\max} / \sigma_{\min}$. This approach delivers high computational stability and is thus the default and most commonly used method. The underlying computations rely on LAPACK routines for numerical linear algebra. For other norms $p$, it requires explicitly computing the inverse of $A$ and thus is generally less numerically stable than the 2-norm computation based on SVD.

We assess both accuracy and computational efficiency using the following metrics:
\begin{itemize}
    \item For each test matrix $\mathbf{A}$, let $\kappa_1(\mathbf{A})$ denote the ground truth condition number and $\hat{\kappa}_1(\mathbf{A})$ the estimate produced by a given method. We define \emph{Logarithmic relative error:}
\begin{equation}
\mathrm{LRE}(\mathbf{A}) = \frac{\left| \log_{10} \hat{\kappa}(\mathbf{A}) - \log_{10} \kappa(\mathbf{A}) \right|}{\left| \log_{10} \kappa(\mathbf{A}) \right|}.
\label{eq:lre}
\end{equation}
This normalized variant accounts for the magnitude of the true condition number, with values reported as percentages. In practice, a small number is added, e.g. $10^{16}$, to the denominator to prevent \emph{zero-division protection}.

We aggregate these metrics across the test set via their mean and maximum:
\begin{equation}
\overline{\mathrm{LRE}} = \frac{1}{N_{\mathrm{test}}} \sum_{i=1}^{N_{\mathrm{test}}} \mathrm{LRE}(\mathbf{A}_i), \quad
\mathrm{LRE}_{\max} = \max_{1 \leq i \leq N_{\mathrm{test}}} \mathrm{LRE}(\mathbf{A}_i).
\label{eq:aggregated_metrics}
\end{equation}

Unlike standard relative error in linear space, the logarithmic metrics \eqref{eq:aggregated_metrics} is well-suited to condition numbers spanning multiple orders of magnitude. For instance, a linear relative error of $100\%$ for $\kappa_1 = 10^6$ ($\hat{\kappa}_1 = 2 \times 10^6$) translates to $\mathrm{LRE} \approx 30\%$, correctly reflecting that the estimate remains within the same order of magnitude.

\item  To evaluate computational efficiency, we measure wall-clock time for each method over $R = 4$ independent runs per test matrix, reporting:

\begin{enumerate}
\item \textbf{Mean inference time:}
\begin{equation}
T_{\mu}(\mathbf{A}) = \mathrm{mean}\left\{ t_1(\mathbf{A}), \ldots, t_R(\mathbf{A}) \right\},
\label{eq:median_time}
\end{equation}
where $t_r(\mathbf{A})$ is the time for the $r$-th run. The median is preferred over the mean to mitigate outliers from system-level noise. For task 1), the runtime for GNN also additionally includes the trivial computation of $\|\mathbf{A}\|_1$. 

\item \textbf{Speedup factor:}
\begin{equation}
S_{\mathcal{M}_1 \to \mathcal{M}_2}(\mathbf{A}) = \frac{T_{\mu}^{\mathcal{M}_1}(\mathbf{A})}{T_{\mu}^{\mathcal{M}_2}(\mathbf{A})},
\label{eq:speedup}
\end{equation}
quantifying how many times faster method $\mathcal{M}_2$ is compared to exact method $\mathcal{M}_1$ (performed by \texttt{numpy.linalg.cond(A)}). We primarily report $S_{\mathrm{Exact} \to \mathrm{GNN}}$ and $S_{\mathrm{Higham} \to \mathrm{GNN}}$. 
\end{enumerate}

For the GNN, we decompose the total time as
\begin{equation*}
T_{\mathrm{GNN}}(\mathbf{A}) =
\begin{cases}
T_{\mathrm{feature}}(\mathbf{A}) + T_{\mathrm{inference}}(\mathbf{A}) + T_{\mathrm{norm}}(\mathbf{A}), & \text{if } \hat{\kappa}(\mathbf{A}) = \|\mathbf{A}\| \cdot g(\mathbf{A}), \\
T_{\mathrm{feature}}(\mathbf{A}) + T_{\mathrm{inference}}(\mathbf{A}), & \text{if } \hat{\kappa}(\mathbf{A}) = g(\mathbf{A}).
\end{cases}
\end{equation*}
where $T_{\mathrm{feature}}$ is the time to construct the graph and extract features $\bm{\phi}(\mathbf{A})$, $T_{\mathrm{inference}}$ is the forward pass through the GNN, and $T_{\mathrm{norm}}$ is the overhead of computing $\|\mathbf{A}\|_1$ and applying \eqref{eq:final_estimate}. 
\end{itemize}

\subsection{Numerical Experiments}

Following the above formulation, we focus on the two schemes on prediction of (1) $g(\mathbf{A}) \approx \|\mathbf{A}^{-1}\|_p$ and (2) $g(\mathbf{A}) \approx \kappa_p(\mathbf{A})$, $\forall \mathbf{A} \in \mathcal{M}$ and $p=1, 2$. To mitigate the effects of environmental variability, we independently reran all competing methods for each scheme. We run each competing algorithm four times while deprecating the first warm-up time, and take their average performance as the recorded score.

\subsubsection{Parameterized Matrix Families}

To ensure that the trained model generalizes across diverse sparse linear systems, we construct a heterogeneous training corpus that includes both symmetric and nonsymmetric matrices, covering five representative problem classes as described below.

\begin{itemize}

\item \textbf{Partial Differential Equation Discretizations}:
\begin{itemize}

\item \textbf{2D Poisson Equation.}
We consider the Poisson problem
\begin{equation*}
-\Delta u = f
\end{equation*}
on the unit square with homogeneous Dirichlet boundary conditions. The equation is discretized on a uniform $m \times m$ grid using the standard 5-point finite difference stencil on interior grid points ($n = m^2$ unknowns).  

The resulting matrix $\mathbf{A} \in \mathbb{R}^{n \times n}$ is sparse, symmetric positive definite (SPD), and corresponds to nearest-neighbor coupling in the grid. Up to a constant scaling factor, its diagonal entries are equal to $4$ and off-diagonal entries are $-1$.  

Its condition number satisfies $\kappa_2(\mathbf{A}) = \Theta(m^2) = \mathcal{O}(h^{-2})$~\cite{trefethen1997numerical}.

\item \textbf{Anisotropic Diffusion Equation.}
We consider the strongly orthotropic anisotropic diffusion problem
\begin{equation*}
-\nabla \cdot (K \nabla u) = f,
\qquad K = \operatorname{diag}(\varepsilon, 1),
\end{equation*}
where $\varepsilon \ll 1$ induces strong anisotropy along the coordinate axes (principal directions aligned with the grid). The operator is discretized using a standard 5-point finite difference scheme on a uniform grid.

The resulting matrix remains sparse and SPD, while its conditioning deteriorates significantly as $\varepsilon \to 0$.  In our dataset, $\varepsilon$ is sampled log-uniformly as  $\varepsilon \sim 10^{\mathcal{U}(-8,-3)} $, where  $\mathcal{U}(a,b) $ denotes the uniform distribution on  $[a,b] $, to produce highly anisotropic systems.

\item \textbf{High-Contrast Diffusion Problems.}
We consider variable-coefficient diffusion operators of the form
\begin{equation*}
-\nabla \cdot (\kappa(x,y)\nabla u),
\end{equation*}
where the coefficient $\kappa(x,y)$ exhibits strong spatial variation.  

The operator is discretized on a uniform grid using a conservative 5-point scheme with harmonic averaging of $\kappa$ at cell interfaces, ensuring symmetry and positive definiteness of the resulting matrix.  

At each grid point, coefficients are sampled as $\kappa_i \in [1, 10^c]$ with $c \sim \mathcal{U}(6,13)$, producing matrices with large coefficient contrast. A uniform scaling factor (e.g., $h^{-2}$) is omitted, as it does not affect the spectral properties or condition number of the system.  

These matrices are representative of challenging problems arising in heterogeneous media and are typically highly ill-conditioned.

\item \textbf{2D Convection--Diffusion Equation.}  
We consider the steady-state convection--diffusion problem
\begin{equation*}
-\varepsilon \nabla^2 u + \boldsymbol{\beta} \cdot \nabla u = f, 
\qquad \boldsymbol{\beta} = (\beta_x, \beta_y),
\end{equation*}
on the unit square with homogeneous Dirichlet boundary conditions.  

The diffusion term is discretized using the standard 5-point stencil, while the convection term is discretized using a first-order upwind scheme. The resulting matrix $\mathbf{A} \in \mathbb{R}^{n \times n}$ is sparse with at most five nonzero entries per row.  

Due to the upwind discretization, the matrix is generally nonsymmetric whenever $\boldsymbol{\beta} \neq \mathbf{0}$. The scheme improves stability in convection-dominated regimes while explicitly breaking symmetry.
\end{itemize}

\item \textbf{Synthetic Random Matrices}:
\begin{itemize}

\item \textbf{Random Sparse SPD Matrices.}
We generate random sparse symmetric positive definite matrices by first sampling a sparse matrix $\mathbf{B}$ with independent entries and symmetrizing it:
\begin{equation*}
\mathbf{A}_0 = \mathbf{B} + \mathbf{B}^\top.
\end{equation*}
Positive definiteness is enforced by adding a sufficiently large diagonal shift:
\begin{equation*}
\mathbf{A} = \mathbf{A}_0 + \alpha \mathbf{I},
\end{equation*}
where $\alpha$ is chosen based on the row-wise absolute sums to ensure strict diagonal dominance.  

An additional small diagonal perturbation is applied to influence the spectrum. This procedure yields sparse SPD matrices spanning a broad range of conditioning, from moderately to highly ill-conditioned regimes.

\item \textbf{Ill-Conditioned SPD Matrices via Diagonal Scaling.}
To further generate extremely ill-conditioned matrices, we apply diagonal congruence transformations of the form
\begin{equation*}
\mathbf{A} = \mathbf{S} \mathbf{D} \mathbf{S},
\end{equation*}
where $\mathbf{D}$ is a sparse SPD matrix constructed as above and $\mathbf{S} = \operatorname{diag}(s_i)$ is a diagonal scaling matrix with logarithmically distributed entries.  

This transformation preserves symmetry and positive definiteness while significantly amplifying spectral variation, producing matrices with very large condition numbers.

\item \textbf{Symmetric Tridiagonal Matrices.}
We include symmetric tridiagonal matrices
\begin{equation*}
\mathbf{A} = \operatorname{tridiag}(-\alpha, 2, -\alpha) \in \mathbb{R}^{n \times n},
\end{equation*}
where $\alpha \sim \mathcal{U}(0.1, 0.9)$. These matrices are strictly diagonally dominant and hence SPD. Their eigenvalues are given analytically by
\begin{equation*}
\lambda_k = 2 - 2\alpha \cos\left(\frac{k\pi}{n+1}\right), \quad k=1,\dots,n,
\end{equation*}
facilitating direct validation of spectral estimation methods.

\end{itemize}

\end{itemize}

We limit the variation in our training  to increments of 2,000 (i.e., from 1,000 to 3,000), because increasing the gap would necessarily require more data. However, in this paper, we focus on model performance when trained with a limited amount of data across different machines or GPU scales. To verify that our method is insensitive to discrepancies in matrix sizes between the training and test sets, we restrict the matrices for training and validation to dimensions in the range of 1,000 and 3,000, and the matrices for testing to 500 and 5,000. We shuffled the final collection with a fixed random seed and split it into training, validation, and testing sets. The matrices for training, validation, and testing are sampled with equal probability from each class.  Our dataset consists of $1{,}000$ training matrices, $100$ validation matrices, and $200$ testing matrices.  The generation routines ensure that the matrices used for training and testing exhibit a wide range of condition numbers. The distribution of  condition numbers for the training and testing set are shown in \figurename~\ref{fig:cond_dist}. The statistical information of the generated matrices for training, validation, and testing is shown in \tablename~\ref{tab:matrix_statistics}.

\begin{figure}[htbp]
    \centering
    \subfigure[Training set (including validation set)]{\includegraphics[width=0.43\linewidth]{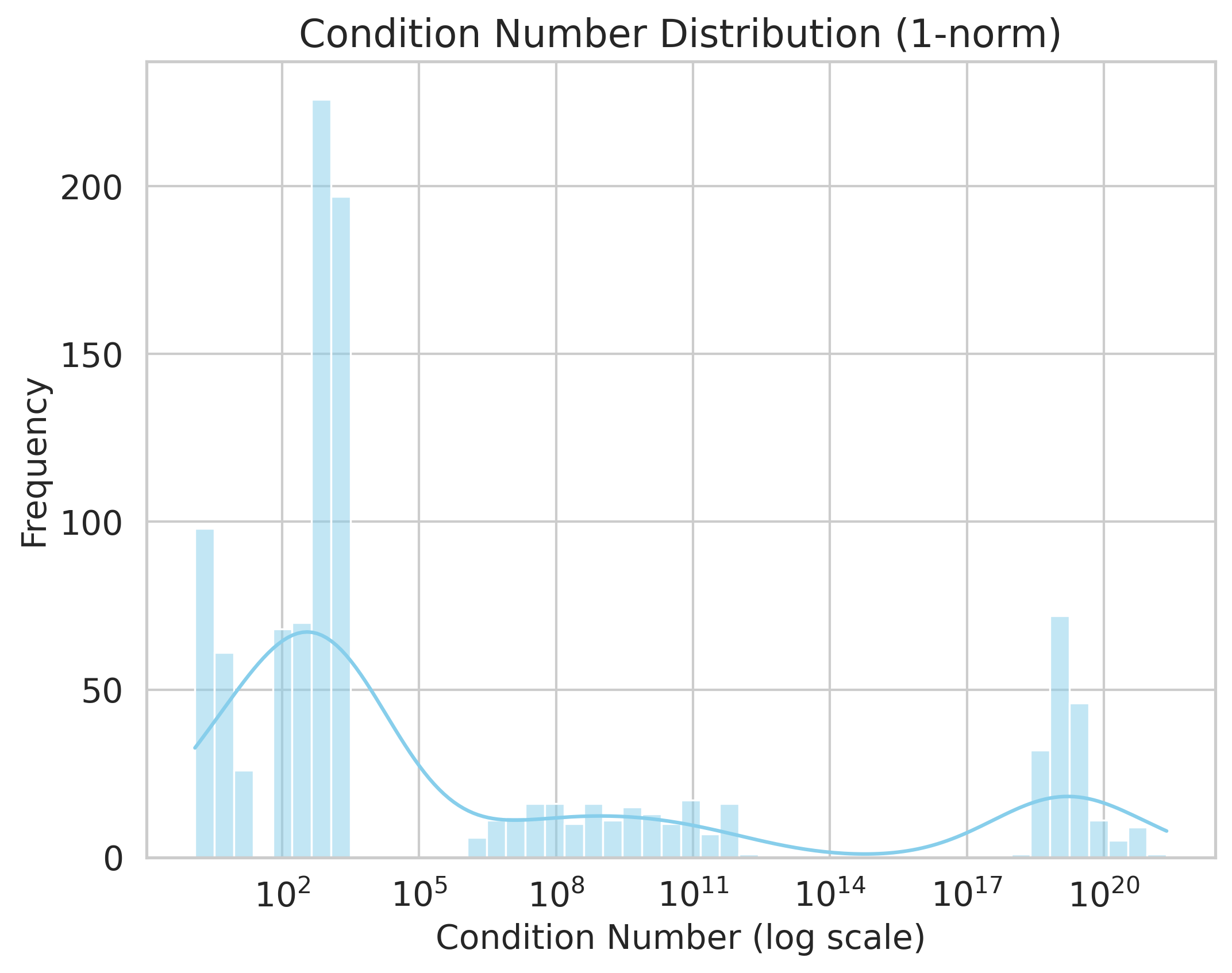}
    \includegraphics[width=0.43\linewidth]{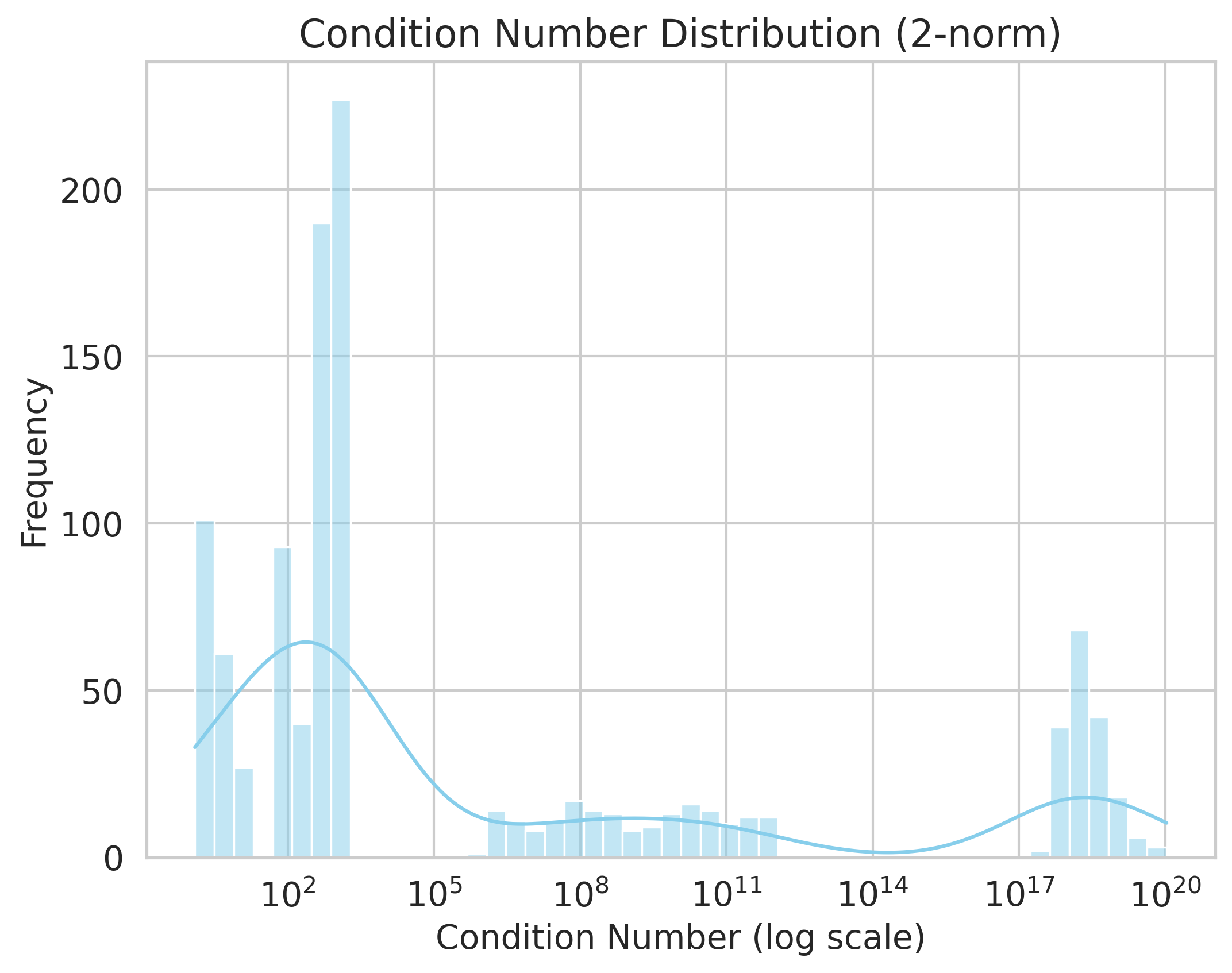}}
    \subfigure[Testing set.]{\includegraphics[width=0.43\linewidth]{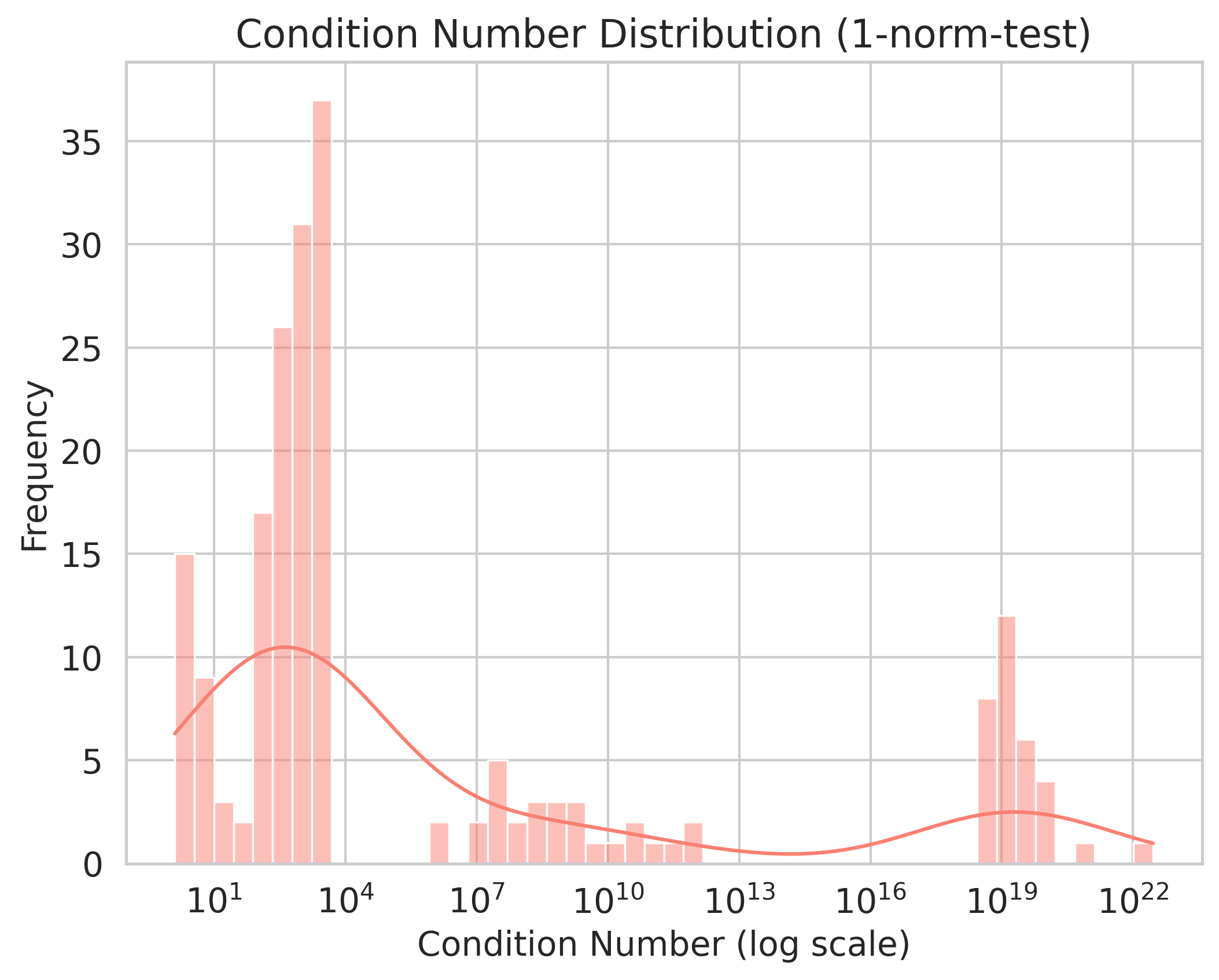}
    \includegraphics[width=0.43\linewidth]{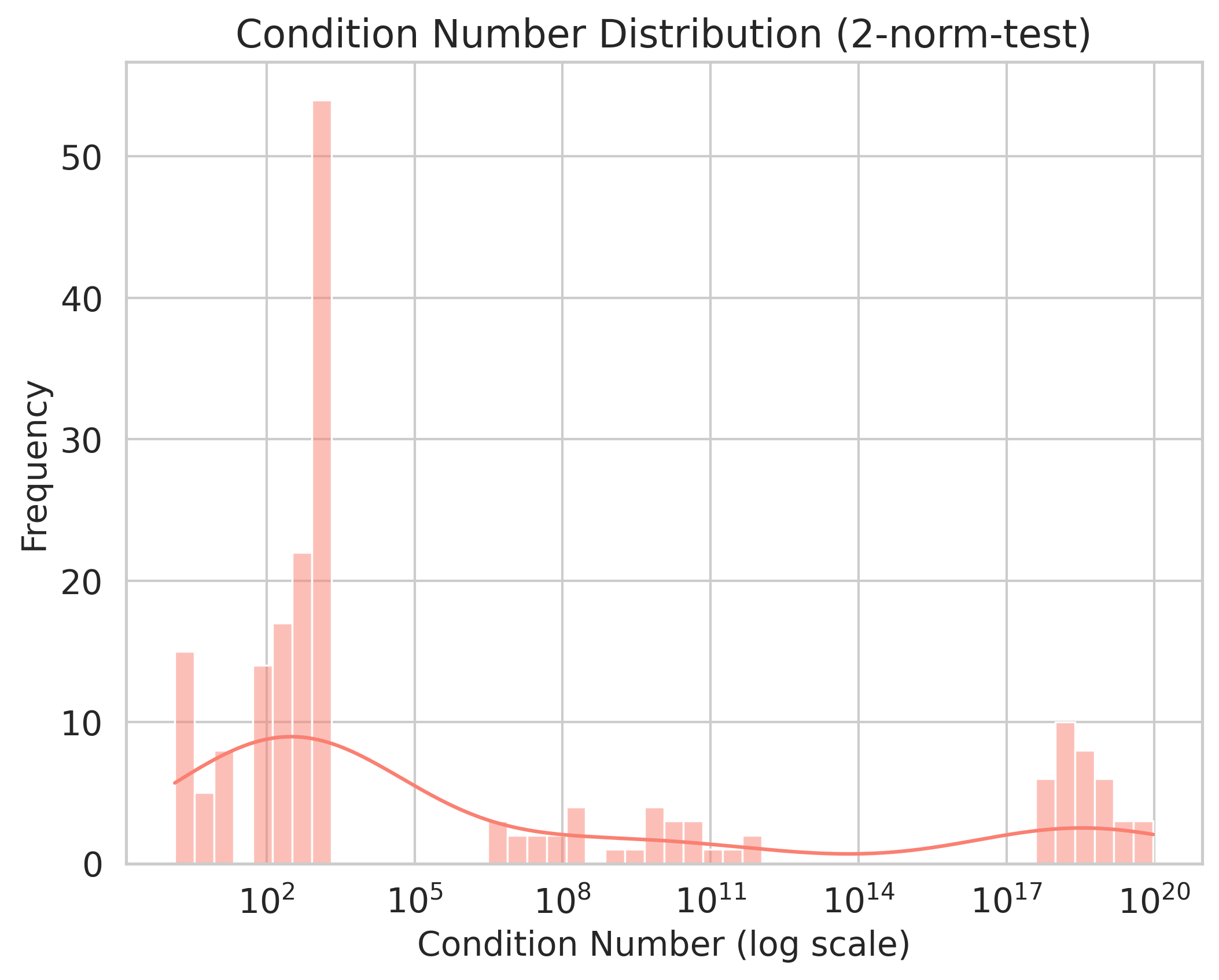}}
    \caption{Matrix condition number distribution on Parameterized Matrix Families test.}
    \label{fig:cond_dist}
\end{figure}

\begin{table}[htbp]
\centering
\caption{Statistics of the matrix datasets. Condition numbers are reported with respect to the 1-norm ($\kappa_1$) and 2-norm ($\kappa_2$). Sparsity is independent of the norm.}
\label{tab:matrix_statistics}
\resizebox{1\linewidth}{!}{%
\begin{tabular}{l c c c c c c}
\toprule
\multirow{2}{*}{Split} & \multirow{2}{*}{\# Matrices} & \multicolumn{2}{c}{Condition number ($\kappa_1$)} & \multicolumn{2}{c}{Condition number ($\kappa_2$)} & \multirow{2}{*}{Sparsity mean (range)} \\
\cmidrule(lr){3-4} \cmidrule(lr){5-6}
 & & Min & Max & Min & Max & \\
\midrule
Train & 1,000 & 1.23 & $2.38 \times 10^{21}$ & 1.26 & $1.04 \times 10^{20}$ & 96.68\% (80.88\%--99.90\%) \\
Test & 200 & 1.27 & $2.88 \times 10^{22}$ & 1.38 & $9.42 \times 10^{19}$ & 97.09\% (80.83\%--99.94\%) \\
Validation & 100 & 1.22 & $2.25 \times 10^{20}$ & 1.26 & $1.07 \times 10^{20}$ & 96.66\% (80.93\%--99.90\%) \\
\bottomrule
\end{tabular}
}
\end{table}

\begin{table}[ht]
\centering
\setlength{\tabcolsep}{2pt}
\caption{Performance comparison for $1$-norm and $2$-norm condition number computation. Time is reported in milliseconds (ms). Logarithmic relative error (LRE) and accuracy are reported in percentage (\%). The reported mean and maximum LRE are computed over samples with $\mathrm{LRE}<1$, while the accuracy metrics are computed over all samples. $t$: block size. S1/S2: Scheme 1/2. infer: inference.}
\label{tab:norm_comparison}
\resizebox{\linewidth}{!}{%
\begin{tabular}{llccccc}
\toprule
\multirow{2}{*}{Norm}
& \multirow{2}{*}{Method}
& \multirow{2}{*}{Mean Time (ms)}
& \multicolumn{2}{c}{Logarithmic Relative Error (\%)}
& \multicolumn{2}{c}{Accuracy (\%)} \\
\cmidrule(lr){4-5} \cmidrule(lr){6-7}
& &
& $\overline{\mathrm{LRE}}$
& $\mathrm{LRE}_{\max}$
& $\overline{\mathrm{LRE}}<0.5$
& $\overline{\mathrm{LRE}}<1$ \\
\midrule
\multirow{10}{*}{$1$-norm}
& \texttt{torch.cond} (CPU) & 411.83 & 0.00 & 0.15 & \textbf{100} & \textbf{100} \\
& \texttt{torch.cond} (GPU) & 43.89 & \textbf{0.00} & \textbf{0.00} & \textbf{100} & \textbf{100} \\
& Hager--Higham (CPU) & 156.48 & 0.00 & 0.15 & \textbf{100} & \textbf{100} \\
& Hager--Higham (GPU) & 40.61 & 0.01 & 0.19 & \textbf{100} & \textbf{100} \\
& Higham-SciPy (t=2, CPU) & 338.61 & 0.00 & 0.00 & \textbf{100} & \textbf{100} \\
& Higham-SciPy (t=10, CPU) & 384.78 & 0.00 & 0.00 & \textbf{100} & \textbf{100} \\
& GCN-S1 (CPU, resp. infer) & 181.20 (176.49) & 0.03 & 0.21 & \textbf{100} & \textbf{100} \\
& GCN-S1 (GPU, resp. infer) & \textbf{9.75 (6.03)} & 0.03 & 0.21 & \textbf{100} & \textbf{100} \\
& GCN-S2 (CPU, resp. infer) & 246.30 (241.56) & 0.47 & 0.98 & 35.0 & 69.0 \\
& GCN-S2 (GPU, resp. infer) & 10.15 (6.55) & 0.47 & 0.98 & 35.0 & 69.0 \\
\midrule
\multirow{8}{*}{$2$-norm}
& \texttt{torch.cond} (GPU) & 1228.14 & \textbf{0.00} & \textbf{0.00} & \textbf{100} & \textbf{100} \\
& Power Method (iter=10, GPU) & 195.67 & 0.08 & 0.53 & 97.5 & \textbf{100} \\
& Power Method (iter=30, GPU) & 471.60 & 0.08 & 0.53 & 97.5 & \textbf{100} \\
& Golub--Kahan (iter=10, GPU) & 20.10 & 0.60 & 0.91 & 17.5 & \textbf{100} \\
& Golub--Kahan (iter=30, GPU) & 67.01 & 0.49 & 0.89 & 57.5 & \textbf{100} \\
& SciPy \texttt{svds} (CPU) & 19066.24 & 0.00 & 0.00 & 21.5 & 21.5 \\
& GCN-S1 (GPU, resp. infer) & 22.24 (6.71) & 0.04 & 0.30 & \textbf{100} & \textbf{100} \\
& GCN-S2 (GPU, resp. infer) & \textbf{13.48 (6.56)} & 0.04 & 0.34 & \textbf{100} & \textbf{100} \\
\bottomrule
\end{tabular}%
}
\end{table}

For Scheme 1, where
$\hat{\kappa}(\mathbf{A}) = |\mathbf{A}| \cdot 10^{\tilde{g}(\mathbf{A}; \theta^)}$, the learning task becomes $\tilde{g}(\mathbf{A}; \theta^) \approx \log_{10} |\mathbf{A}^{-1}|$. The runtime and accuracy of the competing methods are reported in \tablename~\ref{tab:norm_comparison}.
Accordingly, on GPU, the GNN approach achieves consistently lower average runtimes than the Hager--Higham methods for 1-norm condition number estimation and than the exact approach by \texttt{torch.cond} for both 1-norm and 2-norm condition number estimation. On GPU, the average speedup of GNN over the Hager-Higham approach is close to $4\times$. For the 1-norm condition number, GNN under Scheme~1 achieves comparable $\overline{\mathrm{LRE}}$ and $\mathrm{LRE}_{\max}$ to the Hager-Higham method, whereas Scheme~2 leads to noticeably larger errors. For the 2-norm condition number, GNN under both prediction schemes achieve significantly lower  $\overline{\mathrm{LRE}}$ and $\mathrm{LRE}_{\max}$  compared to Golub--Kahan method. According to the runtime, GNN is faster than the 30-iteration Golub--Kahan method, and GNN under Scheme~2 is also faster than the 10-iteration variant, while achieving substantially smaller logarithmic relative errors. Across both norm types, GNN under Scheme~1 maintains $\overline{\mathrm{LRE}} < 0.5$ for all test samples; for the 2-norm condition number, this also holds for Scheme~2. The Hager-Higham implementations show a contrasting profile: the SciPy variants (Higham-SciP with $t=2, 10$) obtain smaller $\mathrm{LRE}_{\max}$ but require longer runtimes.

For Scheme 2, the task focuses on $\hat{\kappa}(\mathbf{A}) = 10^{\tilde{g}(\mathbf{A}; \theta^)}$,
the task becomes a straightforward precision where $\tilde{g}(\mathbf{A}; \theta^) \approx \log_{10} \bigl(|\mathbf{A}|,|\mathbf{A}^{-1}|\bigr)$. The corresponding runtime and accuracy are reported in \tablename~\ref{tab:norm_comparison}.
Similar to the results for the 1-norm condition number in Scheme 1, GNN again achieves the best runtime performance. For 1-norm and 2-norm condition number prediction, GNN attains $\overline{\mathrm{LRE}} < 0.5$ for all test samples, which is slightly better than the Hager–Higham method. For 2-norm condition number prediction, GNN achieves $\overline{\mathrm{LRE}} < 0.5$ for $100\%$ of the test samples, whereas the Golub–Kahan method achieves this threshold for only $57.5\%$ of samples with $10$ iterations and $21.5\%$ with $30$ iterations.
All competing methods maintain $\overline{\mathrm{LRE}} < 1$ except SciPy $\texttt{svd}$ across all test samples. 

Comparing the empirical results of Scheme 1 and Scheme 2, GNN exhibits a relatively smaller LRE and less runtime in Scheme~1 for both 1-and 2-norm cases. However, the 2-norm of a matrix is also a tricky task for estimating so Scheme~2 is more practical for 2-norm condition number estimation. Across the tests, the dominant bottleneck in the GNN approach lies in its inference phase. 

In both 1-norm prediction schemes on CPU, Higham-SciPy exhibits close runtime compared to \texttt{torch.cond} and twice the runtime achieved by the Hager-Higham method. While on GPU, \texttt{torch.cond} and Hager-Higham perform similarly in speed, while being much faster than Higham-Scipy, regardless of the parameter $t$. Among the competing algorithms on CPU deployment, the Hager-Higham achieves the fastest speed, followed by the GNN approaches.  With respect to both CPU and GPU deployment, we found that GNN achieves a significant decrease in runtime compared to competing algorithms on GPU.

In addition, from \tablename~\ref{tab:norm_comparison}, SciPy \texttt{svds} fails to achieve $\overline{\mathrm{LRE}} < 1$ over 85\%. It fails on the test mainly because it requires to compute the smallest singular value using $\texttt{svds}(A,\ \texttt{which="SM"})$, which is much harder and less stable than computing the largest singular value. If the matrix is ill-conditioned, then the smallest singular value \(\sigma_{\min}(A)\) is very close to zero, so tiny numerical errors can dominate the result. In addition, SciPy's sparse SVD method with the default ARPACK solver works through an eigenvalue problem related to $A^{T}A \text{ or } A^{T}$,
whose condition number satisfies approximately $\kappa(A^{T}A) = \kappa(A)^2$. Therefore, a matrix that is already badly conditioned becomes even more numerically difficult internally. As a result, the iterative solver may fail to converge, return an inaccurate estimate of \(\sigma_{\min}(A)\), or trigger an exception.
Thus, the failure is strongly related to the condition number, but it may also reflect solver instability rather than the matrix being exactly singular.

For the 2-norm prediction, it is observed that the Golub–Kahan method with increasing iterations achieves a significant increase in runtime, though with an observed error prone is decreased.

\subsubsection{SuiteSparse Matrix Collection}

We built our dataset from the SuiteSparse Matrix Collection. We focused on real, square, sparse matrices with dimensions between 2{,}000 and 10{,}000. After applying structural filters on size and sparsity, we converted each candidate matrix to compressed sparse row format. 

\begin{table}[htbp]
\centering
\caption{Statistics of the matrix datasets. Condition numbers are reported with respect to the 1-norm ($\kappa_1$) and 2-norm ($\kappa_2$). Sparsity is independent of the norm.}
\label{tab:matrix_statistics_suitesparse}
\resizebox{1\linewidth}{!}{%
\begin{tabular}{l c c c c c c}
\toprule
\multirow{2}{*}{Split} & \multirow{2}{*}{\# Matrices} & \multicolumn{2}{c}{Condition number ($\kappa_1$)} & \multicolumn{2}{c}{Condition number ($\kappa_2$)} & \multirow{2}{*}{Sparsity mean (range)} \\
\cmidrule(lr){3-4} \cmidrule(lr){5-6}
 & & Min & Max & Min & Max & \\
\midrule
Train & 180 & 1.00 & $3.82 \times 10^{19}$ & 1.00 & $1.45 \times 10^{19}$ & 99.34\% (87.52\%--99.98\%) \\
Test & 90 & 1.37 & $7.89 \times 10^{19}$ & 1.00 & $4.78 \times 10^{19}$ & 99.25\% (87.52\%--99.98\%) \\
Validation & 30 & 1.79 & $4.35 \times 10^{17}$ & 1.00 & $1.58 \times 10^{18}$ & 99.35\% (94.52\%--99.98\%) \\
\bottomrule
\end{tabular}
}
\end{table}

To keep only numerically well-behaved matrices, we removed any matrix for which the condition number estimation failed, returned non-finite values, or exceeded $10^{20}$. This eliminated cases of singular or extremely ill-conditioned data. Same as the last test,  we shuffled the final collection with a fixed random seed and split it into a 70\% training set and a 30\% test set. Eventually, we trained on 210 matrices (using 30 matrices among them for validation) and testing on 70 matrices. The information of training, validation, and testing set are presented in \tablename~\ref{tab:matrix_statistics_suitesparse}.  The distribution of condition numbers for the training and testing set are shown in \figurename~\ref{fig:cond_dist2}. 


For the ground-truth 2-norm condition number,  we computed the reference 1-norm condition number as $\kappa_1(A)=\|A|_1\|A^{-1}\|_1$, where $\|A\|_1$ was computed directly and $\|A^{-1}\|_1)$ was obtained by solving $AX=I$ and evaluating the 1-norm of the resulting inverse. For the  ground-truth 2-norm condition number, we computed the reference 2-norm condition number as $\kappa_2(A)=\sigma_{\max}(A)/\sigma_{\min}(A)$, where the singular values were obtained using dense singular value decomposition.

\begin{figure}[htbp]
    \centering
    \subfigure[Training set (including validation set).]{\includegraphics[width=0.43\linewidth]{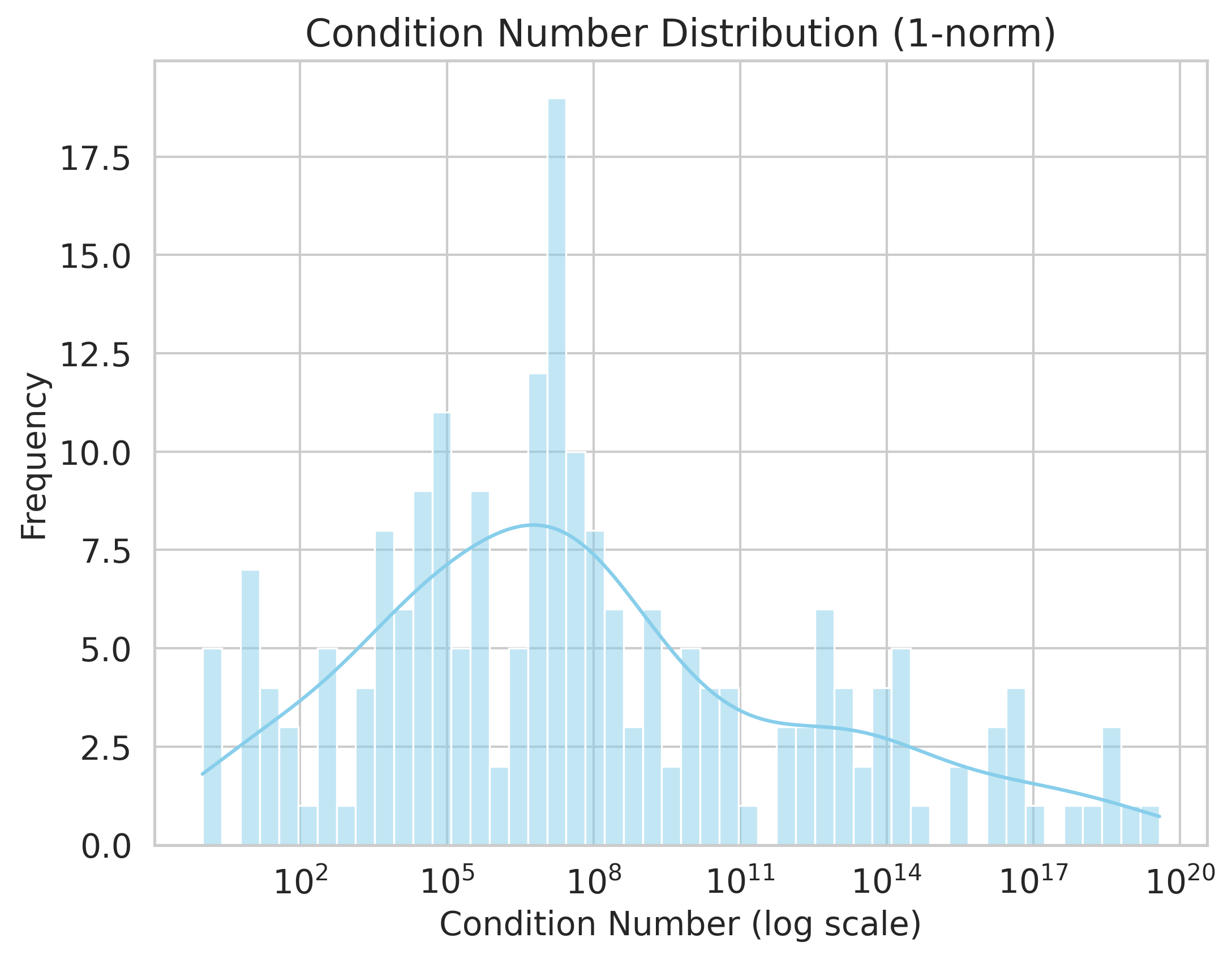}
    \includegraphics[width=0.43\linewidth]{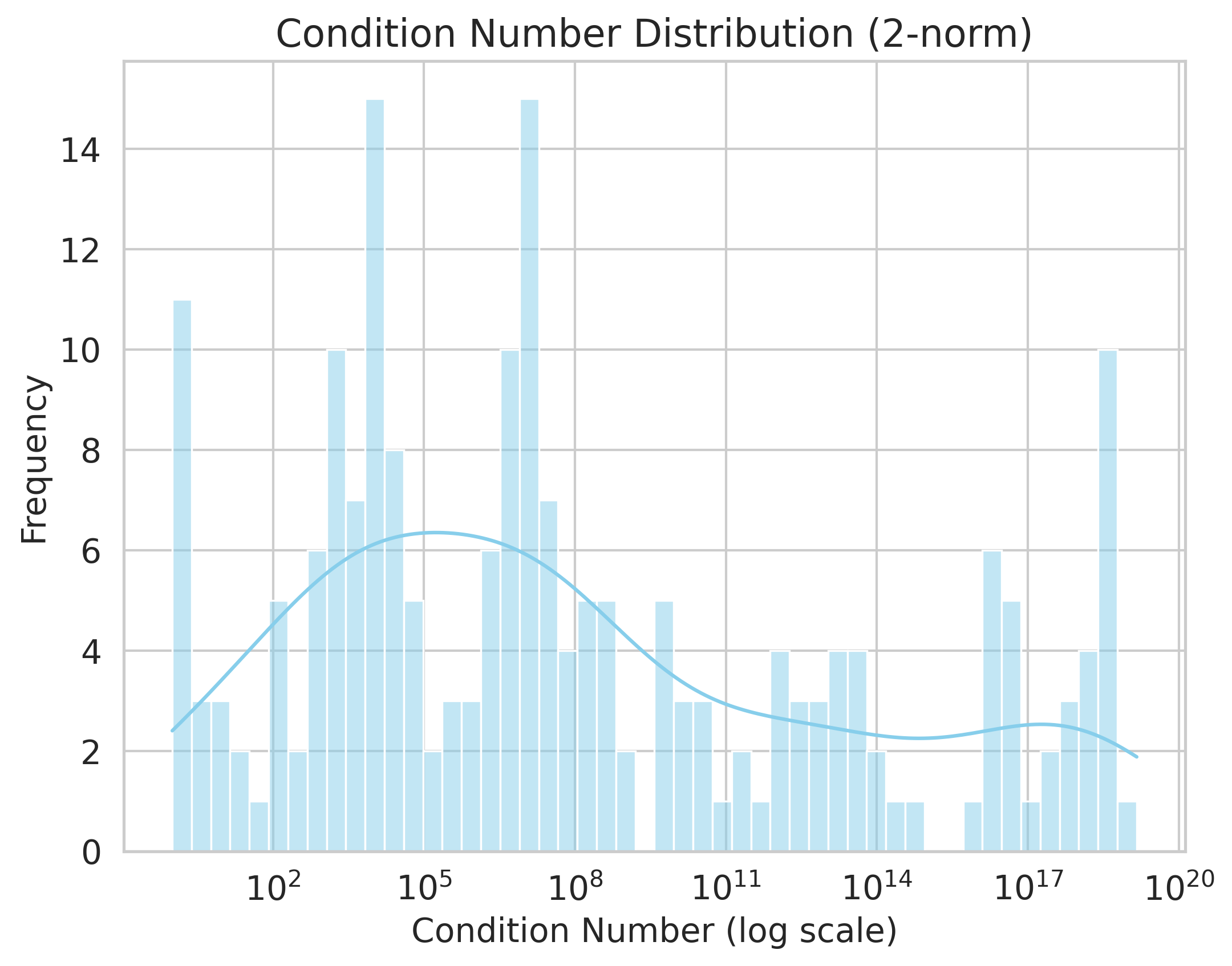}}
    \subfigure[Testing set.]{\includegraphics[width=0.43\linewidth]{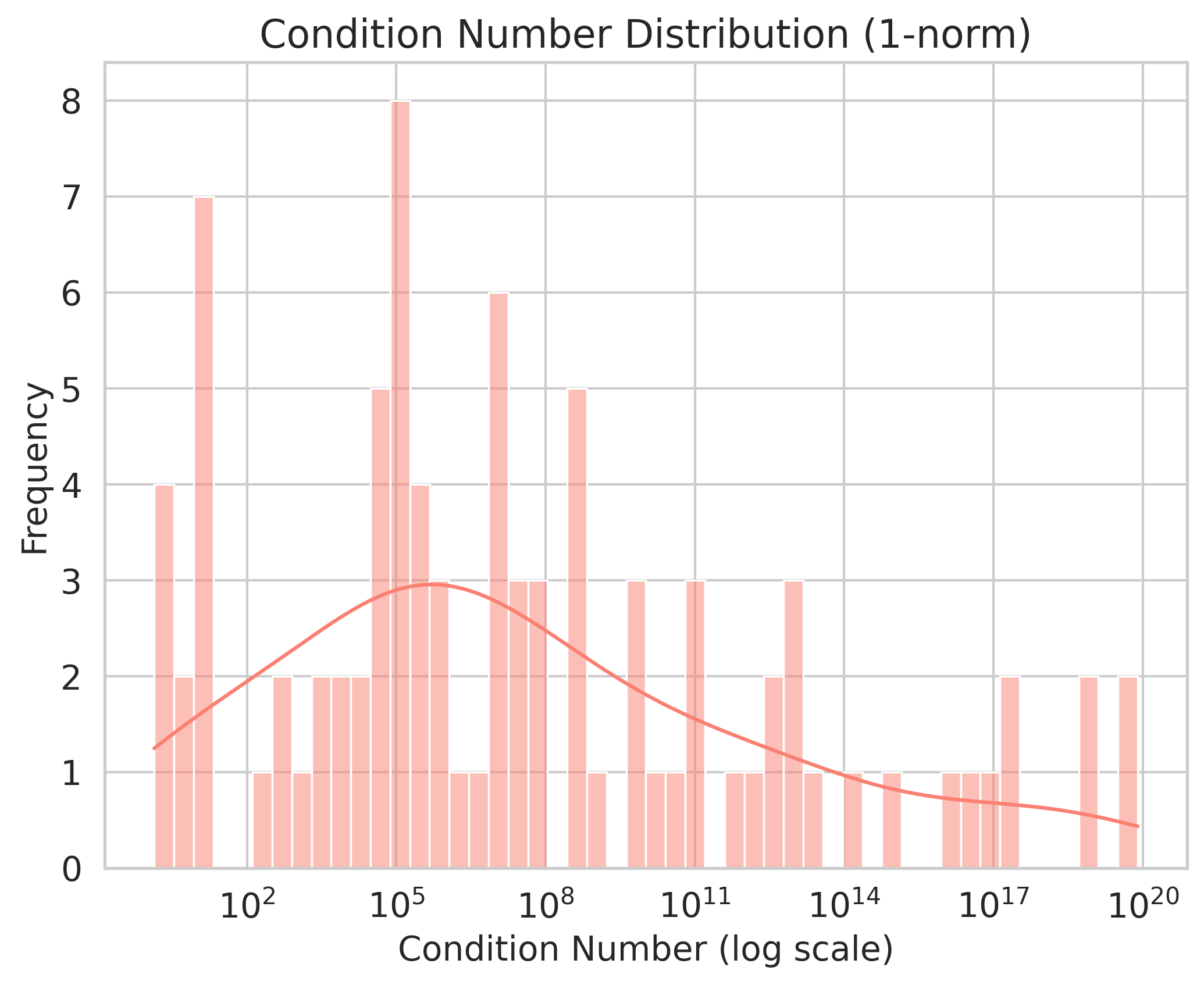}
    \includegraphics[width=0.43\linewidth]{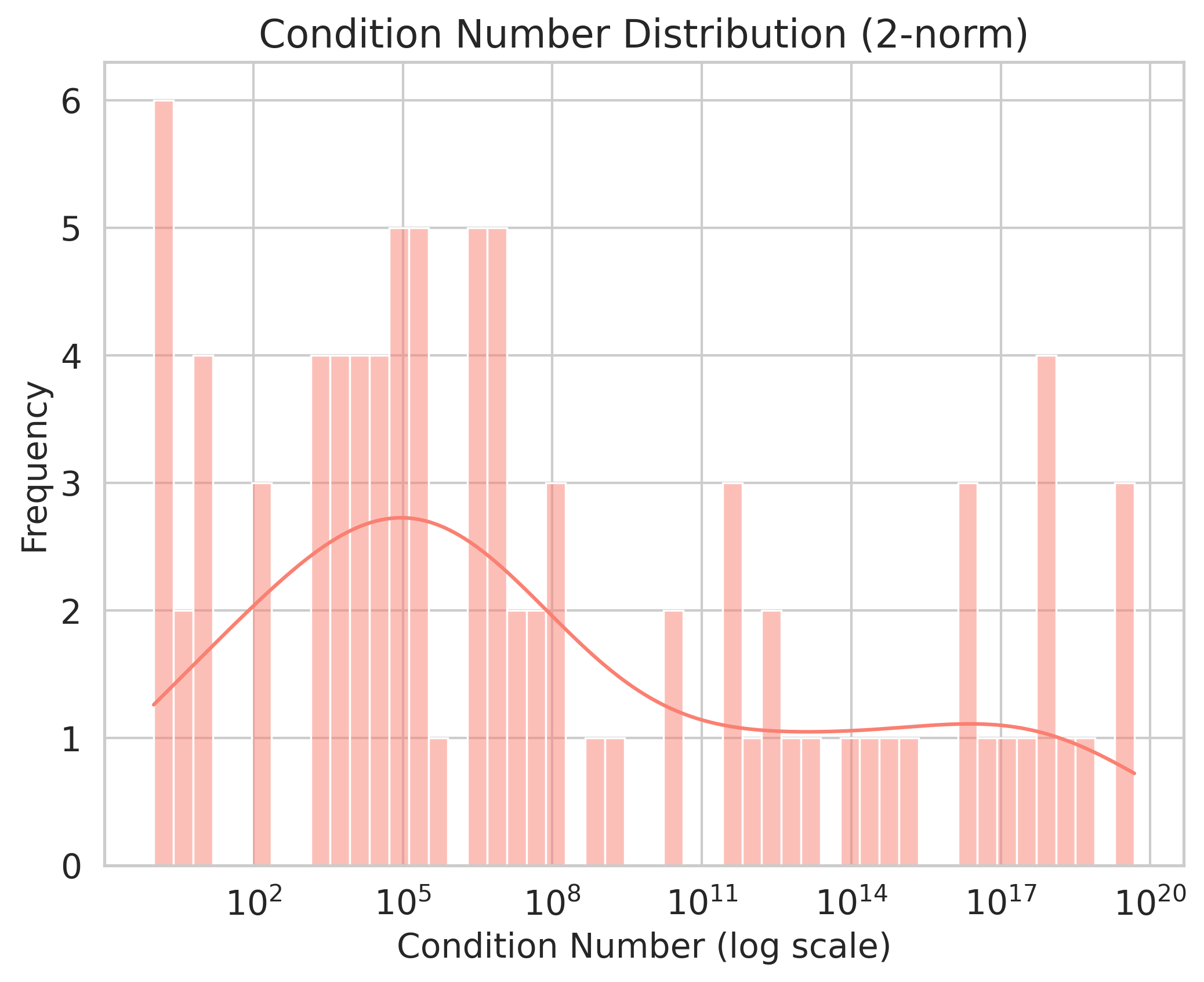}}
    \caption{Matrix condition number distribution on SuiteSparse Matrix Collection test.}
    \label{fig:cond_dist2}
\end{figure}

For the 1-norm condition number estimation, we compared our GNN estimators against several standard baselines: exact 1-norm computation in PyTorch, SciPy’s 1-norm estimator with different block sizes  ($t=2, t=10$), and a PyTorch implementation of the Hager–Higham estimator (both CPU and GPU versions when available). The GNN estimators follows the same two schemes--one predicting $\log_{10}\|A^{-1}\|_1$ and reconstructing $\kappa_1(A)$ from the known $\|A\|_1$, and another that directly predicts $\log_{10}\kappa_1(A)$. For the 2-norm condition number estimation, we benchmarked the proposed GNN estimators against several representative 1-norm and 2-norm baselines as illustrated in the last experiment. In addition,  we evaluated two GNN prediction schemes:  one predicting $\log_{10}\|A^{-1}\|$ and reconstructing $\kappa(A)$ from the known $\|A\|$, and another that directly predicts $\log_{10}\kappa(A)$.

The empirical results on CPU and GPU are illustrated in Table~\ref{tab:suitesparse_norm_comparison}. Similarly to the last experiment, SciPy \texttt{svds} fall most of tests, with $\overline{\mathrm{LRE}}<1$ only of 12.2\%. The GNN achieves the highest speedup on both CPU and GPU for the 1-norm and 2-norm cases.  In terms of estimation accuracy evaluated by $\overline{\mathrm{LRE}}$, the GNN estimators achieve $\overline{\mathrm{LRE}} < 1$ in more than 80\% of cases and $\overline{\mathrm{LRE}} < 0.5$ in over 70\% of cases. Although these results fall behind the Hager-Higham and Higham-SciPy approaches on 1-norm case, they are slightly better than the Golub--Kahan approach and worse than the power method on 2-norm case. Nevertheless, the GNN retains clear advantages in computational speed. The degraded accuracy compared to our previous simulation may be attributed to the limited training data and the distribution shift between the training and test sets.

\begin{table}[ht]
\centering
\setlength{\tabcolsep}{2pt}
\caption{
Performance comparison for $1$-norm and $2$-norm condition number computation. Time is reported in milliseconds (ms). Logarithmic relative error (LRE) and accuracy are reported in percentage (\%). The reported mean and maximum LRE are computed over samples with $\mathrm{LRE}<1$, while the accuracy metrics are computed over all samples. $t$: block size. S1/S2: Scheme 1/2. infer: inference.
}
\label{tab:suitesparse_norm_comparison}
\resizebox{\linewidth}{!}{%
\begin{tabular}{llccccc}
\toprule
\multirow{2}{*}{Norm} & \multirow{2}{*}{Method}
& \multirow{2}{*}{Mean Time (ms)} & \multicolumn{2}{c}{Logarithmic Relative Error (\%)}
& \multicolumn{2}{c}{Accuracy (\%)} \\
\cmidrule(lr){4-5} \cmidrule(lr){6-7}
& &
& $\overline{\mathrm{LRE}}$ & $\mathrm{LRE}_{\max}$
& $\overline{\mathrm{LRE}}<0.5$ & $\overline{\mathrm{LRE}}<1$ \\
\midrule
\multirow{10}{*}{$1$-norm}
& \texttt{torch.cond} (CPU) & 3080.03 & 0.00 & 0.06 & \textbf{100.0} & \textbf{100.0} \\
& \texttt{torch.cond} (GPU) & 179.73 & 0.00 & 0.05 & \textbf{100.0} & \textbf{100.0} \\
& Higham (CPU) & 945.48 & 0.01 & 0.13 & \textbf{100.0} & \textbf{100.0} \\
& Higham (GPU) & 139.78 & 0.01 & 0.13 & \textbf{100.0} & \textbf{100.0} \\
& Higham-SciPy (t=2, CPU) & 1084.69 & 0.01 & 0.14 & \textbf{100.0} & \textbf{100.0} \\
& Higham-SciPy (t=10, CPU) & 1196.07 & 0.00 & 0.15 & \textbf{100.0} & \textbf{100.0} \\
& GCN-S1 (CPU, resp. infer) & 211.33 (207.42) & 0.23 & 0.93 & 81.1 & 94.4 \\
& GCN-S1 (GPU, resp. infer) & \textbf{7.47 (4.68)} & 0.23 & 0.93 & 81.1 & 94.4 \\
& GCN-S2 (CPU, resp. infer) & 218.92 (215.21) & 0.25 & 0.97 & 76.7 & 88.9 \\
& GCN-S2 (GPU, resp. infer) & 7.47 (4.62) & 0.25 & 0.97 & 76.7 & 88.9 \\
\midrule
\multirow{8}{*}{$2$-norm}
& \texttt{torch.cond} (GPU) & 38855.83 & 0.01 & 0.52 & 98.9 & \textbf{100.0} \\
& Power Method (iter=10, GPU) & 457.16 & 0.11 & 0.99 & 93.3 & 98.9 \\
& Power Method (iter=30, GPU) & 1066.85 & 0.10 & 0.98 & 95.6 & 98.9 \\
& Golub--Kahan (iter=10, GPU) & 14.45 & 0.64 & 0.94 & 21.1 & \textbf{100.0} \\
& Golub--Kahan (iter=30, GPU) & 50.11 & 0.56 & 0.91 & 26.7 & \textbf{100.0} \\
& SciPy \texttt{svds} (CPU) & 19631.02 & 0.09 & 0.83 & 11.1 & 12.2 \\
& GCN-S1 (GPU, resp. infer) & 12.20 (3.64) & 0.25 & 0.93 & 70.0 & 85.6 \\
& GCN-S2 (GPU, resp. infer) & \textbf{6.68 (3.62)} & 0.34 & 1.00 & 62.2 & 83.3 \\
\bottomrule
\end{tabular}%
}
\end{table}

\section{Limitations}\label{sec:limitations}

Our approach relies on the quality of the training data: the closer the distributions of the training and test sets, the better the accuracy of the model. Ideally, the test data is expected to follow a distribution similar to that used to train the model; however, this is not strictly required, as we have evaluated on the test of SuiteSparse Matrix Collection, where training and testing dataset are different, and our GNN model’s generalization ability is demonstrated. Future studies will investigate which factors (e.g., matrix sizes or matrix element magnitude) are most sensitive and contribute to performance loss on testing data. In this paper, we focus on the formulation of our approach and demonstrate the capabilities of GNNs for condition number estimation. We have not optimized the model architecture, hyperparameters, or training routine, so readers may achieve further refined results by tuning the hyperparameters. 
Besides, as observed in the experiments, the dominant runtime for condition number prediction for GNN lies in the inference. Therefore, potential speedup can be achieved with ONNX \cite{onnxruntime}.

\section{Conclusion}\label{sec:concludion}

We proposed a rapid condition number estimation approach using graph neural networks for sparse linear systems. We propose two prediction schemes, one by decomposing the condition number into an exactly computable forward norm and a neural-network-predicted inverse norm, and another by directly predicting the matrix condition number. To fully leverage the graph neural network power for matrix-based prediction, the neural network architecture combines graph convolutional layers to capture sparsity patterns with global feature encoders for statistical properties, enabling effective generalization. Future work will explore optimized GNN architectures and end-to-end feature engineering.  Extensive experiments for the two schemes show that, with a limited amount of data, our method achieves orders of magnitude speedup over exact computation and compares favorably against the Hager-Higham and Golub–Kahan methods at a certain scale in terms of speed while maintaining competitive accuracy. Additionally, we study the selected matrix features theoretically and empirically for their contributions to model performance.  

We believe that our work can be beneficial to existing work on precision tuning (e.g., \cite{car26pre}), since precision usage is highly dependent on condition number and a faster estimation of it can significantly aid applications.

\section*{Acknowledgments}

The authors thank Cunyi Li for careful proofreading of the manuscript.

\bibliographystyle{abbrv}
\bibliography{references}

\end{document}